\newtheorem{theorem}{Theorem}
\numberwithin{theorem}{section}
\newtheorem{lemma}[theorem]{Lemma}
\newtheorem{prop}[theorem]{Proposition}
\newtheorem{ass}{Assumption}
\newtheorem{definition}[theorem]{Definition}
\theoremstyle{remark}
\renewcommand{\P}{\mathbb{P}}
\newcommand{\E}{\mathbb{E}}
\newcommand{\R}{\mathbb{R}}
\newcommand{\calG}{\mathcal{G}}
\newcommand{\calB}{\mathcal{B}}
\newcommand{\calC}{\mathcal{C}}
\newcommand{\calA}{\mathcal{A}}
\newcommand{\calN}{\mathcal{N}}
\newcommand{\calU}{\mathcal{U}}
\newcommand{\Epsilon}{\mathcal{E}}
\newcommand{\norm}[1]{\left\lVert#1\right\rVert}
\newcommand{\vX}{\mathbf{X}}
\newcommand{\vY}{\mathbf{Y}}
\newcommand{\vA}{\mathbf{A}}
\newcommand{\vR}{\mathbf{R}}
\newcommand{\ba}{\mathbf{a}}
\newcommand{\bb}{\mathbf{b}}
\newcommand{\bv}{\mathbf{v}}
\newcommand{\bx}{\mathbf{x}}
\newcommand{\bchi}{\bm{\chi}}
\newcommand{\btheta}{\bm{\theta}}
\newcommand{\wh}[1]{\widehat{#1}}
\newcommand{\wt}[1]{\widetilde{#1}}
\newcommand{\wb}[1]{\overline{#1}}
\newcommand{\Reg}{\mathrm{Reg}}
\newcommand{\Act}{\mathcal{A}}
\title{Multi-Armed Bandits with Network Interference}
\author{%
  Abhineet Agarwal\\
  Department of Statistics \\
  UC Berkeley \\
  \texttt{aa3797@berkeley.edu}
  \And
  Anish Agarwal \\
  Department of IEOR\\
  Columbia University\\
  \texttt{aa5194@columbia.edu} \\
  \And
  Lorenzo Masoero\\
  Amazon \\
  \texttt{masoerl@amazon.com}\thanks{The research presented in this paper was conducted independently and is entirely unrelated to the author's current appointment at Amazon.} \\
  \And
  Justin Whitehouse \\
  School of Computer Science \\
  Carnegie Mellon University \\
  \texttt{jwhiteho@cs.cmu.edu}
  }
\begin{document}
\maketitle
\begin{abstract}
\label{sec:abstract}
Online experimentation with interference is a common challenge in modern applications such as e-commerce and adaptive clinical trials in medicine.
For example, in online marketplaces, the revenue of a good depends on discounts applied to competing goods. 
Statistical inference with interference is widely studied in the offline setting, but far less is known about how to adaptively assign treatments to minimize regret. 
We address this gap by studying a multi-armed bandit (MAB) problem where a learner (e-commerce platform) sequentially assigns one of possible $\mathcal{A}$ actions (discounts) to $N$ units (goods) over $T$ rounds to minimize regret (maximize revenue). 
Unlike traditional MAB problems, the reward of each unit depends on the treatments assigned to other units, i.e., there is \emph{interference} across the underlying network of units.
With $\Act$ actions and $N$ units, minimizing regret is combinatorially difficult since the action space grows as $\Act^N$.
To overcome this issue, we study a \emph{sparse network interference} model, where the reward of a unit is only affected by the treatments assigned to $s$ neighboring units. 
We use tools from discrete Fourier analysis to develop a sparse linear representation of the unit-specific reward $r_n: [\mathcal{A}]^N \rightarrow \mathbb{R} $, and propose simple, linear regression-based algorithms to minimize regret. 
Importantly, our algorithms achieve provably low regret both when the learner observes the interference neighborhood for all units and when it is unknown. 
This significantly generalizes other works on this topic which impose strict conditions on the strength of interference on a \emph{known} network, and also compare regret to a markedly weaker optimal action. 
Empirically, we corroborate our theoretical findings via numerical simulations. 
\end{abstract}

\section{Introduction}
\label{sec:intro}

Online experimentation is an indispensable tool for modern decision-makers in settings ranging from e-commerce marketplaces \citep{li2016collaborative} to adaptive clinical trials in medicine \citep{durand2018contextual}. 
Despite the wide-spread use of online experimentation to assign treatments to units (e.g., individuals, subgroups, or goods), a significant challenge in these settings is that outcomes of one unit are often affected by treatments assigned to other units. 
That is, there is \emph{interference} across the underlying network of units.
For example, in e-commerce, the revenue for a given good depends on discounts applied to related or competing goods.
In medicine, an individual's risk of disease depends not only on their own vaccination status but also on that of others in their network.
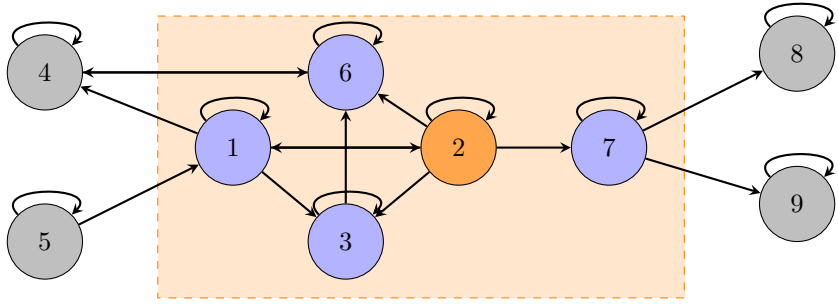
\begin{figure}[h!]
\begin{center}
\begin{tikzpicture}
    % Highlighted area
    \fill[orange!20] (-1.,1.5) -- (6,1.5) -- (6,-2.25) -- (-1.,-2.25) -- cycle;
    \draw[orange, dashed] (-1.,1.5) -- (6,1.5) -- (6,-2.25) -- (-1.,-2.25) -- cycle;   
    % Nodes
    \node[draw, circle, fill=blue!30, minimum size = 10mm] (A) at (0,-.25) {$1$};
    \node[draw, circle, fill=orange!70, minimum size = 10mm] (B) at (3,-.25) {$2$};
    \node[draw, circle, fill=blue!30, minimum size = 10mm] (C) at (1.5,-1.5) {$3$};
    \node[draw, circle, fill=gray!50, minimum size = 10mm] (D) at (-2.5,.75) {$4$};
    \node[draw, circle, fill=gray!50, minimum size = 10mm] (E) at (-2.5,-1.5) {$5$};
    \node[draw, circle, fill=blue!30, minimum size = 10mm] (F) at (1.5,0.75) {$6$}; 
    \node[draw, circle, fill=blue!30, minimum size = 10mm] (G) at (5,-.25) {$7$};
    \node[draw, circle, fill=gray!50, minimum size = 10mm] (H) at (7.5,1) {$8$};
    \node[draw, circle, fill=gray!50, minimum size = 10mm] (I) at (7.5,-1) {$9$};

    % Edges
    \draw[->, >=stealth, thick] (A) -- (B);
    \draw[->, >=stealth, thick] (B) -- (A);
    \draw[->, >=stealth, thick] (B) -- (C);
    \draw[->, >=stealth, thick] (A) -- (D);
    \draw[->, >=stealth, thick] (A) -- (C);
    \draw[->, >=stealth, thick] (D) -- (F);
    \draw[->, >=stealth, thick] (B) -- (F);
    \draw[->, >=stealth, thick] (F) -- (D);
    \draw[->, >=stealth, thick] (C) -- (F);
    \draw[->, >=stealth, thick] (E) -- (A);
    \draw[->, >=stealth, thick] (B) -- (G);
    \draw[->, >=stealth, thick] (G) -- (I);
    \draw[->, >=stealth, thick] (G) -- (H);

    % Self-loops
    \draw[->, >=stealth, thick, looseness=2, out=135, in=45] (A) to (A);
    \draw[->, >=stealth, thick, looseness=2, out=135, in=45] (B) to (B);
    \draw[->, >=stealth, thick, looseness=2, out=135, in=45] (C) to (C);
    \draw[->, >=stealth, thick, looseness=2, out=135, in=45] (D) to (D);
    \draw[->, >=stealth, thick, looseness=2, out=135, in=45] (E) to (E);
    \draw[->, >=stealth, thick, looseness=2, out=135, in=45] (F) to (F);
    \draw[->, >=stealth, thick, looseness=2, out=135, in=45] (G) to (G);
    \draw[->, >=stealth, thick, looseness=2, out=135, in=45] (H) to (H);
    \draw[->, >=stealth, thick, looseness=2, out=135, in=45] (I) to (I);

\end{tikzpicture}
\caption{A visual representation of sparse network interference. In this toy example, we have $N = 9$ units, and visualize the interference pattern. 
For unit $2$ (orange), its outcomes are affected by the treatments of its neighbours (blue) $\calN(2) = \{1,2,3,6,7\}$. 
}
\label{fig:intro_sparse_network}
\end{center}
\end{figure}

Network interference often invalidates standard tools and algorithms for the design and analysis of experiments. 
While there has been significant work done to develop tools for statistical inference in the offline setting (see Section \ref{sec:related_work}), this problem has mostly been unaddressed in the online learning setting.  
In this paper, we address this gap by studying the multi-armed bandit (MAB) problem with network interference. 
We consider the setting where a learner (online marketplace) assigns one of possible $\mathcal{A} \in \mathbb{N}$ actions (varying discounts) to $N$ units (goods) over $T$ rounds to minimize average regret.
In our setting, the reward of a unit $n \in [N]:=\{1,\ldots,N\}$ depends on the actions assigned to other units.\footnote{For any positive integer $x$, we let $[x] := \{1, \ldots x\}$.} 
With $N$ units and $\mathcal{A}$ actions, achieving low regret is difficult since there are $\Act^N$ possible treatment assignments. 
Naively applying typical MAB methods such as the upper confidence bound (UCB) algorithm \citep{auer2002finite} leads to regret that scales as $O(\sqrt{\Act^N T})$, which can be prohibitively large due to the exponential dependence on $N$. 
Further, without any assumptions on the interference pattern, regret scaling as $\wt{\Omega}(\sqrt{\Act^N})$ is unavoidable due to lower bounds from the MAB literature~\citep{lattimore2020bandit}. 

To overcome this issue, we consider a natural and widely-studied model of \emph{sparse network interference}, where the reward $r_n: [\calA]^N \rightarrow \mathbb{R}$ for unit $n$ is affected by the treatment assignment of at most $s$ other units, i.e., neighbours.  
See Figure \ref{fig:intro_sparse_network} for a visualization. 
Under this model, we provide algorithms that provably achieve low regret both when the learner observes the network (i.e., the learner knows the $s$ neighbors for all units $n$), and when it is unknown. 
These results significantly improve upon previous works on this topic, which impose stronger conditions on a known interference network, and compare regret to a markedly weaker policy \citep{jia2024multi}. 
%A high-level summary of our contributions are as follows. 

\vspace{-3mm}
\paragraph{Contributions.} 
\begin{enumerate}[label = (\roman*),leftmargin=*] 
\vspace{-1.5mm}
\item For each unit $n \in [N]$, we use the Fourier analysis of discrete functions to re-express its reward  $r_n : [\mathcal{A}]^N : \rightarrow \mathbb{R}$ as a linear function in the Fourier basis with coefficients $\btheta_n \in \mathbb{R}^{\mathcal{A}^N}$.
We show sparse network interference implies $\btheta_n$ is $\Act^s$ sparse for all $n \in [N]$.
This sparse linear representation motivates a simple `explore-then-commit' style algorithm that uniformly explores actions, then fits a linear model to estimate unit-specific rewards (i.e., $\btheta_n$). 

\item  With known interference (i.e., the learner knows the $s$ neighbors for all $n \in [N]$), our algorithm exploits this knowledge to estimate $r_n$ by performing ordinary least squares (OLS) locally (i.e., per unit) on the Fourier basis elements where $\btheta_n$ is non-zero. 
Our analysis establishes regret $\Tilde{O}((\mathcal{A}^sT)^{2/3})$ for this algorithm. 
We also propose an alternative sequential action elimination algorithm with $\Tilde{O}(\sqrt{N\mathcal{A}^sT})$ regret that is optimal in $T$, but at the cost of $\sqrt{N}$ as compared to the explore-then-commit algorithm. 
The learner can choose between both algorithms depending on the relative scaling between problem parameters $N, T, \mathcal{A}$, and $s$. 

\item  With unknown interference, we use the Lasso instead of OLS locally which adapts to sparsity of $\btheta_n$ and establish regret $\Tilde{O}(N^{1/3}(\mathcal{A}^sT)^{2/3})$. 
We argue this $T^{2/3}$ scaling cannot be improved.

\item Numerical simulations with network interference show our method outperforms baselines.  

\end{enumerate}

\section{Related Work}
\label{sec:related_work}

\textbf{Causal inference and bandits with interference.} 
The problem of learning causal effects in the presence of cross-unit interference has received significant study from the causal inference community (see \citep{bajari2023experimental} for a thorough overview).
Cross-unit interference violates basic assumptions for causal identifiability, invalidating standard designs and analyses.\footnote{Specifically, it violates the stable unit treatment value assumption (SUTVA) \citep{rubin1978bayesian}.}
As a result, authors have developed methodologies for estimating causal effects under several models of interference such as intra-group interference~\citep{hudgens2008toward, rosenbaum2007interference}, interference neighborhoods ~\citep{gao2023causal, ugander2013graph,bhattacharya2020causal,yu2022estimating,cen2022causal}, in bipartite graphs representative of modern online markets~\citep{pouget2019variance, bajari2021multiple, bajari2023experimental}, in panel data settings~\citep{agarwal2022network} as well as under a general model of interference, generally encoded via ``exposure mappings'' \citep{aronow2012general, aronow2017estimating}. 
Despite this large literature, there is much less work on learning with interference in online settings. 
\citet{jia2024multi} take an important step towards addressing this gap by studying MABs with network interference, but assume a known, grid-like interference pattern, where the strength of the interference decays as the $\ell_1$ distance between units grows. 
Moreover, their focus -- unlike ours -- is on establishing regret rates with respect to the best constant policy, i.e.\ the best policy that assigns each unit the same treatment. 
See Section \ref{sec:model_and_background} for a detailed description of these differences.

\textbf{Bandits with high-dimensional action spaces.} In MAB problems, regret is typically lower bounded by $\wt{\Omega}(\sqrt{\#\text{Actions}\cdot T})$, where $\#\text{Actions} = \Act^N$ in our setting. 
Typically, this curse of dimensionality is addressed by sparsity constraints on the rewards, where only a small fraction of actions have non-zero rewards ~\citep{kwon2017sparse, abbasi2012online, hao2020high}. 
Particularly relevant to this paper is the work of \citet{hao2020high} who consider sparse linear bandits. 
The authors utilize a ``explore-then-commit'' style algorithm to uniformly explore actions before using the Lasso to estimate the sparse linear parameter. 
We utilize a similar algorithm but allow for arbitrary interaction between neighboring units, instead using discrete Fourier analysis to linearly represent rewards \citep{negahban2012learning, o2014analysis,agarwal2023synthetic}. 
This is similar to kernel bandits~\citep{srinivas2009gaussian, chowdhury2017kernelized, whitehouse2024sublinear}, which assume there exists a feature map such that the rewards can be linearly represented (non-sparsely) in a high-dimensional reproducing kernel Hilbert space.
Also related are stochastic combinatorial bandits \citep{chen2013combinatorial,cesa2012combinatorial}, in which the action space is assumed to be a subset of $\{0, 1\}^N$ but rewards are typically inherently assumed to be linear in treatment assignments.
That is, these works typically assume the reward $r = \langle \btheta, \ba \rangle$ for $\ba \in \{0,1\}^N$, with valid actions $\ba$ often having at most $s$ non-zero components. 
Our work (with $\calA = 2$), considers an arbitrary function $r: \{0,1\}^N \rightarrow \mathbb{R}$, but explicitly constructs a feature map via discrete Fourier analysis such that rewards can be represented linearly.  

\section{Model \& Background}
\label{sec:model_and_background}

In this section, we first describe the problem setting, and our notion of regret. 
Then, we introduce the requisite background on discrete Fourier analysis that we will use to motivate our algorithm and theoretical analysis. 
Last, we introduce the model that we study in this paper. 
Throughout this paper, we use boldface to represent vectors and matrices.

\subsection{Problem Set-up}
\label{subsec:set_up}

We consider an agent that sequentially interacts with an environment consisting of $N$ individual units over a series of $T$ rounds. 
We index units $n \in [N]$, and rounds $t \in [T]$.
At each time step $t$, the agent simultaneously administers each unit $n$ action (or treatment)  $a \in [\Act]$.
Let $a_{nt}$ denote the treatment received by unit $n$ at time step $t$, and let $\ba_t = (a_{1t}, \ldots, a_{Nt}) \in [\Act]^N$ denote the entire treatment vector.
Each unit $n$ possesses an unknown reward mapping $r_n : [\Act]^N \rightarrow [0, 1]$. 
Note that we allow the reward for a given unit $n$ to depend on the treatments assigned to \emph{all} other units, i.e., we allow for cross-unit \emph{interference}. 
After assigning a treatment to all units in round $t$, the agent then observes the \textit{noisy reward} for unit $n$ as $R_{nt} = r_{n}(\ba_t) + \epsilon_{nt}$. 
Denote the vector of observed rewards as  $\vR_t := (R_{1t} \ldots R_{Nt})$.
We assume the following standard condition on the noise $\epsilon_{nt}$. 
\vspace{1mm}

\begin{ass}
\label{ass:noise}
$(\epsilon_{nt} : n\in[N], t \in [T])$ is a collection of mutually independent 1-sub-Gaussian random variables.
\end{ass} 

\textbf{Regret.} To measure the performance of the learning agent, we define the average reward function $\bar{r}: [\Act]^N \rightarrow [0,1]$ by $\bar{r}(\ba) = \frac{1}{N}\sum_{n = 1}^N r_n(\ba)$. 
Then, for a sequence of (potentially random) treatment assignments $\ba_{1} \ldots \ba_{T}$, the regret at the horizon time $T$ is defined as the quantity 
\begin{equation}
\label{eq:regret_def}
    \Reg_{T} = \sum_{t = 1}^T \bar{r}(\ba^\ast) - \sum_{t = 1}^T \bar{r}(\ba_t),
\end{equation}
where $\ba^\ast \in \arg\max_{\ba \in [A]^N} \bar{r}(\ba)$. 
In Sections~\ref{sec:known} and \ref{sec:unknown}, we provide and analyse algorithms that achieve small regret with high probability. 

\textbf{Comparison to other works.} Previous works such as \citet{jia2024multi} measure regret with respect to the best constant action $\ba' := \arg\max_{a \in [\mathcal{A}]}\bar{r}(a\mathbf{1})$ where $\mathbf{1} \in \mathbb{R}^N$ denotes the all $1$ vector of dimension $N$.
We compare regret to the optimal action $\ba^\ast \in \arg\max_{\ba \in [A]^N} \bar{r}(\ba)$, which is combinatorially more difficult to minimize since the policy space is exponentially larger ($\calA^N$ vs $\calA$). 
Our setup is also different than the traditional MAB setting since the agent in this problem does not observe a single scalar reward, but one for each unit (similar to semi-bandit feedback in the combinatorial bandits literature~\citep{cesa2012combinatorial}). 
As we show later, this crucially allows us to exploit local, unit-specific information that allow for better regret rates. 

\subsection{Background on Discrete Fourier Analysis}
\label{subsec:fourier_background}

In this section, we provide background on discrete Fourier analysis, which we heavily employ in both our algorithm and analysis. 
Specifically, these Fourier-analytic tools provide a linear representation of the discrete unit-specific rewards $r_n: [\mathcal{A}]^N \rightarrow [0,1]$, which will allow us to leverage well-studied linear bandit algorithms. 
For the rest of paper, assume $\Act$ is a power of $2$. 
If instead, if $2^\ell < \Act < 2^{\ell + 1}$ for some $\ell \geq 0$, we can redundantly encode actions to obtain $\Act' = 2^{\ell + 1}$ total treatments. 
As seen later, this encoding does not affect the overall regret.  

\textbf{Boolean encoding of action space.} Since by assumption $\mathcal{A}$ is a power of $2$, every action $a ~\in~ [\mathcal{A}]$ can be uniquely represented as a binary number using $\log_2(\mathcal{A})$ bits.
Explicitly, let $\tilde{\bv}(a) = (\tilde{v}_{1}(a), \ldots \tilde{v}_{\log_{2}(\mathcal{A})}(a)) \in \{0,1\}^{\log_2(\mathcal{A})}$ denote this vectorized binary representation.
For ease of analysis, we use the Boolean representation instead $\bv(a) = 2\tilde{\bv}(a) - \mathbf{1} \in \{-1,1\}^{\log_2{\mathcal{A}}}$.  
For $\ba \in [\Act]^N$, define $\bv(\ba) = (\bv(a_1), \ldots, \bv(a_N)) \in \{-1,1\}^{N\log_2(\mathcal{A})}$. 
Note each action $\ba \in [\Act]^N$ corresponds to a unique Boolean vector $\bv(\ba)$.

\textbf{Boolean representation of discrete functions.} Let $\mathcal{F} = \{f: [\Act]^N \rightarrow \mathbb{R}\}$ and $\mathcal{F_{\text{Bool}}} = \{\tilde{f}: \{-1,1\}^{N\log_{2}(\Act)} \rightarrow \mathbb{R}\}$ be the collection of all real-values functions defined on the set $[\Act]^N$ and $\{-1,1\}^{N\log_{2}(\Act)}$ respectively. 
Since every $\ba \in [\Act]^N$ has a uniquely Boolean representation $\bv(\ba) \in \{-1, 1\}^{N\log_2(\Act)}$, the set of functions  $\mathcal{F}$ can be naturally identified within $\mathcal{F_{\text{Bool}}}$. 
Specifically, any $f \in \mathcal{F}$ can be identified with the function $\Tilde{f} \in \mathcal{F_{\text{Bool}}}$ by $f(\cdot) = \tilde{f}(\bv(\cdot))$.

%\lmcomment{I dropped the $f: [\Act]^N \in \mathcal{F}$ and $\Tilde{f}: \{-1,1\}^{\log_2(\Act)} \in \mathcal{F_{\text{Bool}}}$, not sure if it was intended to use that notation?}
%

\textbf{Fourier series of Boolean functions.} This identification is key for our use since the space of Boolean functions admits a number of attractive properties. 

\emph{(1) Hilbert space.} $\mathcal{F_{\text{Bool}}}$ forms a Hilbert space defined by the following inner product: for any $h,g \in \mathcal{F_{\text{Bool}}}$,  
% \[
    $\langle h, g \rangle_{B} = \Act^{-N}\sum_{\bx \in \{-1,1\}^{N\log_{2}(\Act)}}h(\bx)g(\bx)$.
% \]
%
This inner product induces the norm $\langle h, h \rangle_{B} ~\coloneqq~ \lVert h \rVert_{B}^2 =  \Act^{-N} \sum_{\bx \in \{-1,1\}^p}h^2(\bx).$

\emph{(2) Simple orthonormal basis.} For each subset $S \subset [N\log_2(\Act)]$, define a basis function $\chi_{S}(\bx) = \prod_{i \in S} x_i$ where $x_i$ is the $i^\text{th}$ coefficient of $\bx \in  \{-1,1\}^{N\log_{2}(\Act)}$. 
One can verify that for any $S \subset [N\log_{2}(\Act)]$, $\lVert \chi_{S} \rVert_B = 1$, and that $\langle \chi_S, \chi_{S'} \rangle_B = 0$ for any $S' \neq S$. 
Since $|\{\chi_S: S \subset [N\log_{2}(\Act)] \}| = \Act^N$, the functions $\chi_S$ are an orthonormal basis of $\mathcal{F}_{\text{bool}}$.
We refer to $\chi_S$ as the Fourier character for the subset $S$.

\emph{(3) Linear Fourier expansion of $\mathcal{F}_{\text{Bool}}$.}
Any $h \in \mathcal{F}_{\text{bool}}$ can be expanded via the following Fourier decomposition: $h(\bx) = \sum_{S \subset[N\log_2(\Act)]}\theta_{S}\chi_{S}(\bx),$
where the Fourier coefficient $\theta_S$ is given by $\theta_S = \langle h, \chi_S \rangle_B $.
For $h \in \mathcal{F}_{\text{Bool}}$, we refer to $\btheta_h = (\theta_S: S \subset [N\log_{2}(\Act)])\in \mathbb{R}^{\Act^N}$ as the vector of Fourier coefficients associated with it. 
% %
For $\bx \in \{-1, 1\}^{N\log_{2}(\Act)}$, let $\bchi(\bx) = (\chi_{S}(\bx): {S \in [N\log_{2}(\Act))])} \in \{-1, 1\}^{\Act^N}$ be the vector of associated Fourier character outputs. 
For $\ba \in [\Act]^N$, abbreviate $\chi_S(\bv(\ba))$ and $\bchi(\bv(\ba))$ as $\chi_S(\ba)$ and $\bchi(\ba)$ respectively.

% \abhicomment{This paragraph was really dense. I tried to break it up to make it easier.}

\subsection{Model: Sparse Network Interference}
\label{subsec:model}

The unit-specific reward function $r_n: [\Act]^N \rightarrow \mathbb{R}$ can be equivalently viewed as a real-valued Boolean function over the hypercube $\{-1,1\}^{N\log_2(\Act)}$.
That is, $r_n$ takes as input a vector of actions $\ba \in [\Act]^N$, converts it to a Boolean vector $\bv(\ba) \in \{-1,1\}^{N\log_2(\Act)}$, and outputs a reward $r_n(\ba)$. 
From the discussion in Section \ref{subsec:fourier_background}, we can represent unit $n$'s reward as $r_n(\ba) = \sum_{S \subset [N\log_2(\Act)]} \theta_{n,S} \chi_{S}(\ba) = \langle \btheta_n, \bchi(\ba) \rangle$, where $\btheta_n = (\theta_{n,S}: S \subseteq N\log_{2}(\mathcal{A})) \in \R^{\Act^{N}}$ is a vector of Fourier coefficients.

Without any assumptions on the nature of the interference pattern, achieving low regret is impossible since it requires estimating $\Act^N$ Fourier coefficients per unit.
To overcome this fundamental challenge, we impose a natural structure on the interference pattern which assumes that the reward $r_n$ only depends on the the treatment assignment of a subset of $s$ units. 
This assumption is often observed in practice, e.g., the revenue of a good does not depend on discounts applied to all other goods, but only those applied to similar or related ones.  
\begin{ass} (Sparse Network Interference)
\label{ass:sparse_network_interference}
For any unit $n \in [N]$, there exists a {neighborhood} $\calN(n) \subset [N]$ of size $|\calN(n)| \leq s$ such that $r_n(\ba) = r_n(\bb)$ for all $\ba, \bb \in \{-1, 1\}^{N\log_2\Act}$ satisfying $(a_m : m \in \calN(n)) = (b_m : m \in \calN(n))$.
\end{ass}

We typically assume that $n \in \calN(n)$, i.e.\ unit $n$'s reward depends on its own treatment. 
This model allows for completely arbitrary interference between these $s$ units, generalizing the results of \citet{jia2024multi} who allow for interaction between all $N$ units but assume the strength of interference decays with a particular notion of distance between units.  
%
%{\color{blue} Should we add the point that \citet{jia2024multi} allow for interference between a unit and all other units, i.e., $s = N$ in their case, which our model doesn't allow for, but we get other benefits.}
%
Next, we show using our Fourier analytic tools, that Assumption \ref{ass:sparse_network_interference} implies that the reward can be re-expressed as a sparse linear model. We prove the following in Appendix~\ref{app:sparsity}.
\begin{prop} 
\label{prop:sparse_linear_model}
Let Assumption \ref{ass:sparse_network_interference} hold. Then, for any unit $n$, and action $\ba \in [\Act]^N$, we have the following representation of the reward $r_{n}(\ba) = \langle \btheta_{n}, \bchi(\ba) \rangle$, where $\norm{\btheta_n}_0 \leq \Act^s$.\footnote{For a vector $\bx \in \R^d$, we define $\|x\|_0 := \sum_{i = 1}^d \mathbbm{1}(x_i \neq 0)$} 
\end{prop}

Proposition \ref{prop:sparse_linear_model} shows sparse network interference implies $\btheta_n$ is $\Act^s$ sparse with non-zero coordinates corresponding to the interactions of treatments between units in $\calN(n)$. 
Indeed, the Boolean encoding $\bv(a)$ can be represented as blocks of $\log_2(\mathcal{A})$ dimensional Boolean vectors:
\begin{equation*}
    \bv(\ba) = (\underbrace{\bv(\ba)_{1:\log_2(\Act)}}_{\text{Unit 1's treatment}}, \dots, \underbrace{\bv(\ba)_{(i - 1)\log_2(\Act) + 1:i \log_2(\Act)}}_{\text{Unit $i$'s treatment}}, \dots, \underbrace{\bv(\ba)_{(N - 1)\log_2(\Act) + 1:N\log_2(\Act)}}_{\text{Unit $N$'s treatment}}).
\end{equation*}
Unit $n$'s reward depends on a small collection of these blocks, those indexed by its neighbors. Define
\begin{equation*}
    \calB(n) := \Big\{ i \in [N\log_2(\Act)] : i \in [(m - 1)\log_2(\Act) + 1: m\log_2(\Act)] \text{ for some } m \in \calN(n)\Big\}.
\end{equation*} 

$\calB(n)$ contains the indices of $\bv(a)$ corresponding to treatments of units $m \in \calN(n)$ and the non-zero entries of $\btheta_n$ are indexed by subsets $S \subset \calB(n)$.
E.g., consider $N = 3$, $\mathcal{A} = 2$, with $\calN(1) = \{1,2\}.$
Then $\calB(1) = \{1,2\}$ and $S \subset \calB(n) = \{\emptyset, \{1\}, \{2\}, \{1,2\}\}$, where $\emptyset$ is the empty set. 

\textbf{Graphical interpretation.} Assumption \ref{ass:sparse_network_interference} can be interpreted graphically as follows. 
Let $\mathcal{G} = ([N], \Epsilon)$ denote a \textit{directed} graph over the $N$ units, where $\mathcal{E} \subseteq [N] \times [N]$ denotes the edges of $\calG$. 
For unit $n$, we add to the edge set $\Epsilon$ a directed edge $(n, m)$ for each $m \in \calN(n)$, thus justifying calling $\calN(n)$ the \textit{neighborhood} of $n$.
That is, unit $n$'s reward is affected by the treatment of another unit $m$ only if there is a directed edge from $n$ to $m$. 
See Figure \ref{fig:intro_sparse_network} for an example of a network graph $\calG$.

\section{Network Multi-Armed Bandits with Known Interference }
\label{sec:known}
We now present our algorithms and regret bounds when the interference pattern is known, i.e. the learner observes $\calG$ and knows $\calN(n)$ for each unit $n$.  
The unknown case is analysed in Section~\ref{sec:unknown}.
Assuming knowledge of $\calG$ is reasonable in e-commerce, where the platform (learner) assigning discounts (treatments) to goods (units) understands the underlying similarity between goods. 

Our algorithm requires the following additional notation: for $\ba \in [\Act]^N$, let $\bchi^{\ba}(\calB_n) = (\chi_{S}({\ba}): S \subset \mathcal{B}(n)) \in \{-1,1\}^{\Act^s}$, where $\chi_S(\ba)$ are the Fourier characteristics corresponding to subsets of $\calB(n)$. 
Further, let $\mathcal{U}([\Act]^N)$ denote the uniform discrete distribution on the action space $[\Act]^N$.

\begin{algorithm}[H]
   \caption{Network Explore-Then-Commit with Known Interference}
   \label{alg:known}
\begin{algorithmic}[1]
    \State \textbf{Input:} Time horizon $T$, exploration steps $E$, interference graph $\calG = ([N], \Epsilon)$. 
    \State Sample $\ba_1, \dots, \ba_{E} \sim_{\text{i.i.d.}} \calU\left([\Act]^N\right)$ %\calU\left(\{-1, 1\}^{N\log_2(\Act)}\right)$
    \State Observe reward vectors $\vR_t = (R_{1t}, \cdots, R_{Nt})$ for $t \in [E]$, where $R_{nt} = \langle \btheta_n, \bm{\chi}({\ba_t})\rangle + \epsilon_{nt}$.
    \For{$n \in [N]$}
        \State Let $\vX_n =  (\bchi^{\ba_i}(\calB_n): i \in [E]) \in \{-1,1\}^{E \times \Act^s}$ {\color{blue}\texttt{// $\bchi^{\ba}(\calB_n) = (\chi_{S}({\ba}): S \subset \mathcal{B}(n))$}}
        \vspace{0.2mm}
        \State Let $\vY_n = (R_{n1}, \dots, R_{nE})$.
        \State Set $\wt{\btheta}_n := \arg\min_{\btheta \in \R^{\Act^s}} \|\ \vY_n - \vX_n \btheta \|^2_2$
        \State Define $\wh{\btheta}_n$ by $\wh{\theta}_{nS} = \wt{\theta}_{nS}$ if $S \subset \calB(n)$ else set $\wh{\btheta}_{nS} = 0$. {\color{blue} \texttt{// Coordinates of $\wt{\btheta}_n$ indexed by subsets of $\calB(n)$}}
    \EndFor
    \State Set $\wh{\btheta} := N^{-1}\sum_{n = 1}^N\wh{\btheta}_n$.
    \State Play $\wh{\ba} := \arg\max_{\ba \in [\Act]^N}\langle \wh{\btheta}, \bchi(\ba) \rangle$ for the $T - E$ remaining rounds.
\end{algorithmic}
\end{algorithm}

%\footnote{Index coordinates of $\wt{\btheta}_n$ by subsets of $\calB(n)$. }

Algorithm~\ref{alg:known} is a ``explore-then-commit'' style which operates in two phases. 
First, the learner assigns units treatments uniformly at random for $E$ rounds, and observes rewards for each unit. 
In the second phase, the algorithm performs  least squares regressions of the observed rewards against $\bchi^{\ba}(\calB_n)$ for each unit $n$. 
This is because when $\calG$ is known, the learner knows the positions of the non-zero elements of $\btheta_n$ which are precisely the subsets of $\calB(n)$, 
Once the estimates $\hat{\btheta}_n$ are obtained for each unit, they are aggregated to estimate the average reward for each action $\ba \in [\Act]^N$.
In the remaining $T-E$ rounds, the learner greedily plays the action with the highest estimated average reward. 

\paragraph{Determining exploration length $E$.} Theoretically, we detail the length of $E$ below to achieve low regret in Theorem \ref{thm:known}.
Practically, the learner can continue to explore and assess the error of the learnt $\wh{\btheta}_n$ via cross-validation (CV). 
Once the CV error for all units falls below a (user-specified) threshold, commit to the action with highest average reward.
We use this approach for selecting $E$ in our simulations in Section \ref{sec:sim}.

\subsection{Regret Analysis}
\label{subsec:regret_known}

Here, we establish high-probability regret bounds of Algorithm \ref{alg:known} using $O(\cdot)$ notation. We prove the following in Appendix~\ref{app:known}.

\begin{theorem} 
\label{thm:known}
Suppose Assumptions~\ref{ass:noise} and~\ref{ass:sparse_network_interference} hold. For $T = \Omega\left(A^{2s}[\log(2N/\delta) + s\log(\calA) ]\right)$ and any failure probability $\delta \in (0,1)$, Algorithm~\ref{alg:known} run with $E := (T\Act^s)^{2/3}\left[\log\left(\frac{N}{\delta}\right) + s\log\left(\calA\right)\right]^{1/3}$ satisfies 
\[
\Reg_T = O\left(\left[s\log(\Act/\delta)\right]^{1/3}(T \Act^s)^{2/3}\right),
\]
with probability at least $1 - \delta$.

\end{theorem}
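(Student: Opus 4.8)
The plan is to decompose the regret of the explore-then-commit algorithm into an exploration cost and an exploitation cost, and then balance them by the choice of $E$. During the $E$ exploration rounds, the per-round regret is at most $1$ (rewards lie in $[0,1]$), so the exploration phase contributes at most $E$ to the total regret. For the remaining $T-E$ rounds, the learner commits to $\wh{\ba} = \arg\max_{\ba}\langle \wh{\btheta}, \bchi(\ba)\rangle$, and the per-round regret is $\bar r(\ba^\ast) - \bar r(\wh{\ba})$. The key is to bound this suboptimality gap in terms of the estimation error of $\wh{\btheta}$. By the standard greedy argument, since $\wh{\ba}$ maximizes the \emph{estimated} average reward, $\bar r(\ba^\ast) - \bar r(\wh{\ba}) \leq \langle \wb{\btheta} - \wh{\btheta}, \bchi(\ba^\ast) - \bchi(\wh{\ba})\rangle$, where $\wb{\btheta} := N^{-1}\sum_n \btheta_n$ is the true averaged parameter. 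Bounding this by Cauchy–Schwarz or by controlling $\|\wb{\btheta} - \wh{\btheta}\|$ against the $\ell_\infty$-bounded Fourier characters $\bchi \in \{-1,1\}^{\Act^N}$ reduces everything to an estimation bound on each $\wh{\btheta}_n$.

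The heart of the argument is therefore the per-unit OLS analysis. For each fixed $n$, the design matrix is $\vX_n \in \{-1,1\}^{E\times\Act^s}$ whose rows are $\bchi^{\ba_i}(\calB_n)$ for i.i.d.\ uniform $\ba_i$. First I would show that the population Gram matrix $\E[\bchi^{\ba}(\calB_n)\bchi^{\ba}(\calB_n)^\top]$ equals the identity $\mathbf{I}_{\Act^s}$; this is exactly the orthonormality of the Fourier characters $\chi_S$ restricted to subsets $S\subset\calB(n)$ under the uniform measure, which is precisely the inner-product structure established in Section~\ref{subsec:fourier_background}. Next I would invoke a matrix concentration bound (e.g.\ a matrix Chernoff/Bernstein inequality) to show that with $E = \wt\Omega(\Act^s)$ exploration rounds, the empirical Gram matrix $\frac{1}{E}\vX_n^\top \vX_n$ is close to the identity and in particular has smallest eigenvalue bounded below by a constant with high probability. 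Given this well-conditioning and the sub-Gaussian noise from Assumption~\ref{ass:noise}, a standard least-squares error bound yields $\|\wh{\btheta}_n - \btheta_n\|_2 = \wt O(\sqrt{\Act^s/E})$ per unit with probability $1-\delta/N$, using that $\wh{\btheta}_n$ is supported on the correct $\Act^s$ coordinates because $\calG$ (hence $\calB(n)$) is known. A union bound over the $N$ units then controls all estimators simultaneously, which is where the $\log(N/\delta)$ and $s\log\calA$ terms enter.

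Combining these, the committed per-round suboptimality is $\wt O(\sqrt{\Act^s/E})$, so the exploitation regret over $T-E$ rounds is $\wt O\big(T\sqrt{\Act^s/E}\big)$, and total regret is $\wt O\big(E + T\sqrt{\Act^s/E}\big)$. Optimizing over $E$ gives the balance point $E \asymp (T\Act^s)^{2/3}$ up to log factors, yielding total regret $\wt O\big((T\Act^s)^{2/3}\big)$, matching the claimed rate with the $[s\log(\Act/\delta)]^{1/3}$ prefactor arising from the polylog terms in the estimation bound. The condition $T = \Omega(\Act^{2s}[\log(2N/\delta)+s\log\calA])$ is exactly what guarantees that the optimal $E$ exceeds the $\wt\Omega(\Act^s)$ threshold needed for the Gram-matrix concentration to hold (so that $E \leq T$ and the exploration phase is well-conditioned).

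I expect the main obstacle to be the matrix concentration step establishing invertibility and good conditioning of the empirical Gram matrix $\frac{1}{E}\vX_n^\top\vX_n$ with high probability, since this is what dictates the sample-size threshold on $E$ and propagates (via the union bound over units) into the precise logarithmic dependence on $N$, $\calA$, and $\delta$. Care is also needed in how the per-unit $\ell_2$ estimation errors aggregate through the averaging $\wh{\btheta} = N^{-1}\sum_n\wh{\btheta}_n$ and then convert into the suboptimality gap without picking up an extra $\Act^N$ or $N$ factor, which is the step that genuinely exploits the semi-bandit (per-unit) feedback.
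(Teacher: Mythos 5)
Your proposal follows essentially the same route as the paper's proof: the explore-then-commit decomposition with exploration cost $E$, the greedy argument reducing the committed-phase gap to $\langle \wb{\btheta} - \wh{\btheta}, \bchi(\ba^\ast) - \bchi(\wh{\ba})\rangle$, isotropy of the per-unit design (orthonormality of the Fourier characters under uniform sampling) plus matrix concentration to lower-bound $\sigma_{\min}(\vX_n^\top\vX_n/E)$, a standard OLS error bound with a union bound over the $N$ units, and balancing $E$. The paper's matrix-concentration step is Theorem 5.41 of Vershynin (independent isotropic rows with $\|\bchi^{\ba}(\calB_n)\|_2 \leq \sqrt{\calA^s}$), exactly the tool you anticipate, and the paper likewise exploits the known support $\calB(n)$ so that the per-unit inner products live in $\R^{\calA^s}$ rather than $\R^{\calA^N}$.

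One bookkeeping slip worth fixing: you state the committed per-round suboptimality as $\wt{O}(\sqrt{\calA^s/E})$, but that quantity is only the per-unit estimation error $\|\btheta_n - \wh{\btheta}_n\|_2$. Converting parameter error into a reward gap requires Cauchy--Schwarz against $\bchi^{\ba^\ast}(\calB_n) - \bchi^{\wh{\ba}}(\calB_n)$, whose $\ell_2$ norm can be as large as $2\sqrt{\calA^s}$ (this is precisely the step where restricting to the known support buys $\sqrt{\calA^s}$ in place of $\sqrt{\calA^N}$). The correct per-round gap is therefore $\wt{O}(\calA^s/\sqrt{E})$, as in the paper. Your own balancing betrays the slip: optimizing $E + T\sqrt{\calA^s/E}$ would give $E \asymp T^{2/3}\calA^{s/3}$ and regret $\wt{O}(T^{2/3}\calA^{s/3})$, not the $(T\calA^s)^{2/3}$ balance point and rate you (correctly) state; only the $\calA^s/\sqrt{E}$ gap is consistent with $E \asymp (T\calA^s)^{2/3}$ and the theorem's bound. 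With that factor restored, your argument coincides with the paper's.
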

Establishing Theorem \ref{thm:known} requires trading-off the exploration time $E$ to accurately estimate $\btheta_n$ with the exploitation time. 
It also requires $T$ to be large enough such that we can accurately estimate $\btheta_n$. 
Next, we compare regret of Algorithm \ref{alg:known} to other methods, ignoring any dependencies on logarithmic factors to ease the discussion. 

\paragraph{Comparison to other approaches.} 
\begin{enumerate}[label = (\alph*),leftmargin=*] 
\item \emph{Naïve MAB learner.} A naïve learner who treats the entire network of units as a single multi-armed bandit system with $\Act^N$ actions will obtain regret $\wt{O}(\sqrt{T\Act^N})$. 
For sparse networks with $s \ll N$ and $T \ll \Act^N$, our regret bound is significantly tighter. 
\item \emph{Global estimation.} An alternate algorithm would be to estimate Fourier coefficients $\btheta \coloneqq 1/N\sum^N_{i=1} \btheta_n$ of $\bar{r}$ directly rather than estimate each $\btheta_n$ (i.e., $r_n$) individually.
That is, perform the least squares regression by compressing the observed, unit-specific rewards into $\wb{R}_t := N^{-1}\sum_{n = 1}^NR_{tn}$. 
An analysis similar to the one presented in Appendix~\ref{app:known} would yield rate of $\wt{O}(s^{1/3}(TN\Act^s)^{2/3})$, which suffers an additional $N^{2/3}$ cost as compared to Theorem \ref{thm:known}. 
\item \emph{\cite{jia2024multi}.} Comparing regret to this work is difficult because they assume decaying interference strength on a grid-like network structure and establish regret only with respect to the best constant action, i.e., $\ba' := \arg\max_{a \in [\mathcal{A}]}\bar{r}(a\mathbf{1})$.
\end{enumerate}

\subsection{Obtaining optimal regret ($\sqrt{T}$) dependence}
\label{subsec:known_sequential_elimination}

The regret rate in Theorem \ref{thm:known} scales as $T^{2/3}$ rather than the optimal $\sqrt{T}$ unlike classical results on linear bandits \citep{lattimore2020bandit, abbasi2011improved}. 
To obtain $\sqrt{T}$ dependence, we utilize a ``sequential-elimination'' algorithm that eliminates low-reward actions \citep{even2006action}. 
We detail and analyze such an algorithm (Algorithm~\ref{alg:conc_elim} and Theorem~\ref{thm:seq}) in Appendix~\ref{app:seq}. 
Our analysis shows that this approach yields regret $\Reg_T = \wt{O}(\sqrt{NT\Act^s})$, which achieves optimal time dependence $\sqrt{T}$ at the cost of $\sqrt{N}$ as compared to Algorithm \ref{alg:known}. 
The learner can choose between two algorithms depending on the relative scaling depending on $T,N,\mathcal{A}$. 
For example, this ``sequential eliminiation'' algorithm results in better regret than Algorithm \ref{alg:known} (ignoring $\log$ factors) if $T \geq N^{3}$ . 
It remains as interesting future work to explore approaches that obtain the best of both methods.

\section{Network Multi-Armed Bandits with Unknown Interference}
\label{sec:unknown}
Next, we consider the case in which the underlying network $\calG$ governing interference is not known. 
We present Algorithm \ref{alg:unknown}, which extends Algorithm \ref{alg:known} to account for the fact that the learner does not observe the network graph $\calG$ and thus does not know  $\calN(n)$ for all $n$. 
Unknown network interference is common in medical trials, e.g., vaccine roll-outs where an individual's social network (i.e., $\calG$) is unavailable to the learner.
\begin{algorithm}[H]
   \caption{Network Explore-Then-Commit with Unknown Interference}
   \label{alg:unknown}
\begin{algorithmic}[1]
    \State \textbf{Input:} Time horizon $T$, exploration steps $E$, regularization parameter $\lambda > 0$ 
    \State Sample $\ba_1, \dots, \ba_{E} \sim_{\text{i.i.d.}} \calU\left([\Act]^N\right)$%\calU\left(\{-1, 1\}^{N\log_2(\Act)}\right)$
    \State Observe reward vectors $\vR_t = (R_{1t}, \cdots, R_{Nt})$ for $t \in [E]$, where $R_{nt} = \langle \btheta_n, \bm{\chi}({\ba_t})\rangle + \epsilon_{nt}$.
    \State Let $\vX = (\bchi({\ba_i}): i \in [E]) \in \{-1,1\}^{E \times \Act^N}$ 
    \For{$n \in [N]$}
        \State Let $\vY_n := (R_{n1}, \dots, R_{nE})$.
        \State Set $\wh{\btheta}_n := \arg\min_{\btheta \in \R^{\Act^N}}\left\{\frac{1}{2E}\|\vX \btheta - \vY_n \|_2^2 + \lambda \|\btheta\|_1\right\}$
    \EndFor
    \State Set $\wh{\btheta} := N^{-1}\sum_{n = 1}^N\wh{\btheta}_n$.  
    \State Play $\wh{\ba} := \arg\max_{\ba \in [\Act]^N}\langle \wh{\btheta}, \bchi(\ba)\rangle$ for the $T - E$ remaining rounds.
    %\State \textbf{Return}
\end{algorithmic}
\end{algorithm}

Algorithm \ref{alg:unknown} is similar to Algorithm \ref{alg:known}, but differs in how it learns $\btheta_n$. 
Since $\calG$ is unknown, the learner cannot identify the Fourier characteristics which correspond to the non-zero elements of $\btheta_n$. 
Therefore, we regress against the entire Fourier characteristic $\bchi(\ba)$, using Lasso instead of ordinary least squares to adapt to the underlying sparsity of $\btheta_n$. 
A similar CV approach, as discussed after Algorithm \ref{alg:known}, can be used to determine both the exploration length $E$, and regularization parameter $\lambda$. 

\textbf{Low-order interactions.} When $\Act^N$ is very large, the computational cost of running the Lasso can be large. 
Further, if the underlying network is indeed believed to be sparse, one can regress against all characteristics $\chi_S$ where $|S| \leq d$.
A similar approach is explored in \citet{yu2022estimating}. 
In practice, one can choose degree $d$ via CV.

\textbf{Partially observed network graph $\calG$.} In many settings, network interference graphs $\calG$ are partially observed. 
For example, on e-commerce platforms, interference patterns between established classes of goods is well-understood, but might be less so for newer products. 
Our framework can naturally be adapted to this setting by running Algorithm
\ref{alg:known} on the observed portion of $\calG$, and Algorithm \ref{alg:unknown} on the unobserved graph. 
Specifically, if $\calN(n)$ is observed for unit $n$, replace the Lasso in line 7 of Algorithm \ref{alg:unknown} with OLS (i.e., line 8) in Algorithm \ref{alg:known}.

\subsection{Regret Analysis}
\label{subsec:regret_known}
We now establish high-probability bounds on the regret for Algorithm \ref{alg:unknown} in Theorem~\ref{thm:unknown}. We prove the following in Appendix~\ref{app:unknown}.

\begin{theorem} 
\label{thm:unknown}
Suppose Assumptions~\ref{ass:noise} and~\ref{ass:sparse_network_interference} hold, and assume $T = \Omega(A^{2s}\left[\log(N/\delta) + N \log(\Act)\right])$.
Then, with failure probability $\delta \in (0,1)$, Algorithm~\ref{alg:unknown} run with $\lambda = 4\sqrt{E^{-1}\log(2\calA^{N})} + 4\sqrt{E^{-1}\log\left(\frac{2 N}{\delta}\right)}$ where $E := (T\Act^s)^{2/3}\left[\log\left(\frac{N}{\delta}\right) + N\log(\Act)\right]^{1/3}$ satisfies 
\begin{comment}
\[
E = \max\left\{\underbrace{4096\calA^{2s}\left[\log\left(\frac{2}{\delta}\right) + 2N\log\left(\calA\right)\right]}_{T_1}, \underbrace{(N\log(8\calA/\delta))^{1/3}(T\Act^s)^{2/3}}_{T_2}\right\}
\]
\end{comment}
\[
\Reg_T = O\left(\left[N\log(\Act/\delta)\right]^{1/3}(T \Act^s)^{2/3}\right)
\]

%+ N \Act^{2s}\log(N/\delta)\right)

with probability at least $1 - \delta$.

\end{theorem}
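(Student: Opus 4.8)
The plan is to mirror the structure of the known-interference analysis (Theorem~\ref{thm:known}), decomposing the regret into an exploration cost and an exploitation cost, but with the OLS error bound replaced by a Lasso estimation bound that adapts to the $\Act^s$-sparsity of each $\btheta_n$ guaranteed by Proposition~\ref{prop:sparse_linear_model}. During the $E$ exploration rounds, each $\bar r(\ba_t) \in [0,1]$, so the exploration contributes at most $E$ to the regret. For the remaining $T-E$ rounds the learner commits to $\wh\ba = \arg\max_\ba \langle \wh\btheta, \bchi(\ba)\rangle$, and a standard greedy argument bounds the per-round suboptimality: since $\bar r(\ba^\ast) - \bar r(\wh\ba) \le \langle \wh\btheta - \btheta, \bchi(\wh\ba)\rangle + \langle \btheta - \wh\btheta, \bchi(\ba^\ast)\rangle \le 2\max_\ba |\langle \wh\btheta - \btheta, \bchi(\ba)\rangle|$, and each Fourier character satisfies $\bchi(\ba) \in \{-1,1\}^{\Act^N}$, I would control this gap via $\|\wh\btheta - \btheta\|_1$. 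Thus I first reduce everything to bounding $\|\wh\btheta - \btheta\|_1 \le N^{-1}\sum_n \|\wh\btheta_n - \btheta_n\|_1$, the average per-unit Lasso $\ell_1$-estimation error.

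The core of the proof is the per-unit Lasso guarantee. First I would verify the two ingredients standard Lasso analysis requires. (i) \emph{Restricted eigenvalue / compatibility condition:} the design is $\vX = (\bchi(\ba_i) : i \in [E])$ with rows i.i.d.\ uniform Boolean-Fourier vectors; because the characters $\chi_S$ are orthonormal under $\calU([\Act]^N)$ (property (2) in Section~\ref{subsec:fourier_background}), we have $\E[\vX^\top\vX/E] = I_{\Act^N}$, and since entries are bounded in $\{-1,1\}$ I can invoke a matrix/empirical-process concentration argument to show the empirical Gram matrix satisfies a restricted eigenvalue condition with constant bounded away from $0$, with high probability, once $E$ is large enough relative to $\log(\Act^N) = N\log\Act$ — this is precisely where the $N\log\Act$ term in the hypothesis on $T$ and in $\lambda$ and $E$ enters. (ii) \emph{Choice of $\lambda$:} the stated $\lambda = 4\sqrt{E^{-1}\log(2\Act^N)} + 4\sqrt{E^{-1}\log(2N/\delta)}$ is exactly the value making $\lambda \ge 2\|E^{-1}\vX^\top\bm\epsilon_n\|_\infty$ simultaneously for all $n \in [N]$ with probability $\ge 1-\delta$; this follows from sub-Gaussian tail bounds (Assumption~\ref{ass:noise}) on each coordinate $E^{-1}\sum_i \chi_S(\ba_i)\epsilon_{ni}$ together with a union bound over the $\Act^N$ characters and the $N$ units. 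Given (i) and (ii), the textbook oracle inequality yields $\|\wh\btheta_n - \btheta_n\|_1 = O(\Act^s \lambda / \kappa^2) = \wt O(\Act^s \sqrt{N\log(\Act)/E})$ using sparsity $\|\btheta_n\|_0 \le \Act^s$.

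Combining, the exploitation regret over $T-E$ rounds is $O\big(T \cdot \Act^s\sqrt{N\log(\Act/\delta)/E}\big)$, so the total regret is bounded by
\[
\Reg_T = O\!\left(E + T\Act^s\sqrt{\tfrac{N\log(\Act/\delta)}{E}}\right).
\]
Optimizing over $E$ by balancing the two terms gives $E \asymp (T\Act^s)^{2/3}\,[N\log(\Act/\delta)]^{1/3}$, which matches the prescribed exploration length and yields the claimed $\Reg_T = O\big([N\log(\Act/\delta)]^{1/3}(T\Act^s)^{2/3}\big)$. The condition $T = \Omega(\Act^{2s}[\log(N/\delta)+N\log\Act])$ guarantees that this optimal $E$ is large enough for the restricted-eigenvalue event in (i) to hold, so that the Lasso bound is valid.

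The main obstacle I anticipate is establishing the restricted eigenvalue condition for the random Boolean-Fourier design \emph{uniformly} across units with the correct sample-size threshold. Unlike the known-interference case, where local OLS runs in a fixed $\Act^s$-dimensional subspace with a well-conditioned $\pm1$ design, here the Lasso operates in the full $\Act^N$-dimensional space and relies on sparsity rather than a priori support knowledge; showing the empirical Gram matrix restricted to the relevant sparse cones is well-behaved requires a careful concentration argument (e.g.\ via bounded differences or a covering/chaining bound over sparse directions), and it is here that the dependence on $N\log\Act$ — rather than merely $s\log\Act$ as in Theorem~\ref{thm:known} — becomes unavoidable, explaining the extra $N^{1/3}$ factor in the final rate.
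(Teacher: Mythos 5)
Your proposal is correct and follows essentially the same route as the paper: an explore-then-commit regret decomposition, the H\"older ($\ell_1$/$\ell_\infty$) bound reducing exploitation regret to the averaged per-unit Lasso $\ell_1$ error, a high-probability regularity condition on the random Fourier design entering through the lower bound on $T$, the same $\lambda$ with a union bound over the $\calA^N$ characters and $N$ units, and the same balancing of $E$. The only cosmetic difference is that the paper certifies the design via \emph{incoherence} (an entrywise Hoeffding-plus-union bound on $\|\vX^\top\vX/E - \mathbf{I}\|_\infty$, exploiting $\chi_S\chi_{S'} = \chi_{S\Delta S'}$) rather than a restricted-eigenvalue/compatibility condition, and this simple entrywise argument suffices --- no chaining or covering over sparse cones, which you anticipated as the main obstacle, is needed.
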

We note the regret bound requires the horizon $T$ to be sufficiently large in order to learn the network graph $\calG$ --- a necessary detail in order to ensure Lasso convergence. 
This is because the proof of Theorem \ref{thm:unknown} requires establishing that the matrix of Fourier coefficients for the sampled actions (i.e., design matrix $\vX$) satisfies the the necessary regularity conditions to learn $\btheta_n$ accurately.  
Specifically, we show that $\vX$ is incoherent, i.e., approximately orthogonal, with high probability. 
See Appendix ~\ref{app:unknown} for a formal definition of incoherence, and \citet{rigollet2023high, wainwright2019high} for a detailed study of the Lasso.

\textbf{Comparison to other approaches.}
Algorithm \ref{alg:unknown} achieves the same dependence in $\mathcal{A},s,T$ as in the known interference case, but pays a factor of $N^{1/3}$ as compared to $s^{1/3}$. 
This additional cost which is logarithmic in the ambient dimension $\calA^N$ is typical in sparse online learning. 
This regret rate is still significantly lower than naïve approaches that scale as $O(\sqrt{\Act^NT})$ when one assumes $T$ is much smaller then $\Act^N$. 
Further, as argued before, estimating per-unit rewards (i.e., $\btheta_n$) results in lower regret as compared to directly estimating $\Bar{r}$ by a factor of $N^{2/3}$. 

\textbf{Dependence on horizon $T$.} Unlike the known interference setting, the dependence on $T$ cannot be improved. 
\citet{hao2020high} lower bound regret for sparse linear bandits as $\wt{\Omega}((\text{sparsity} \cdot T)^{2/3})$, i.e.,  $\wt{\Omega}((\calA^s \cdot T)^{2/3})$ in our setting. 
They show improved dependence on $T$ can only be achieved under stronger assumptions on the size of non-zero coefficients of $\btheta_n$.

\section{Simulations}
\label{sec:sim}

In this section, we perform simulations to empirically validate our algorithms and theoretical findings. We compare Algorithms \ref{alg:known} and \ref{alg:unknown} to UCB.  
We could not compare to \citet{jia2024multi} since we did not find a public implementation. 
For our Algorithms, we choose all hyper-parameters via $3$-fold CV, and use the \texttt{scikit-learn} implementation of the Lasso.
Code for our methods and experiments can be found at \url{https://github.com/aagarwal1996/NetworkMAB}.
Our experimental setup and results are described below. 

\textbf{Data Generating Process.} We generate interference patterns with varying number of units $N \in \{5,\ldots, 10\}$, and $\Act = 2$. For each $N$, we use $s = 4$. 
We generate rewards $r_n = \langle \btheta_n, \bchi(\ba) \rangle$, where the non-zero elements of $\btheta_n$ (i.e., $\theta_{n,S}$ for $S \subset \calB_n$) are drawn uniform from $[0,1]$.
We normalize rewards so that they are contained in $[0,1]$, and add $1$ sub-gaussian noise to sampled rewards. 
We measure regret as we vary $T$, and set a max horizon of $T_{\text{max}} = 10 \cdot 2^N$ for each $N$. 
Classical MAB algorithms need the horizon $T$ to satisfy $T > 2^N$ since they first explore by pulling all $2^N$ arms. 
We emphasize that these time horizons scaling as $T = C\cdot\Act^N$ are often unreasonable in practice, as even for $\Act = 2$ and $N = 100$ there would already be $\approx 1.27\mathrm{e}^{30}$ actions to explore. We include such large time horizons for the sake of making a complete comparison.
Our methods circumvent the need for exponentially large exploration times by effectively exploiting sparsity.

\textbf{Results.} We plot the regret at the maximum horizon time as a function of $N$, and the cumulative regret as we vary $T$ for $N = 13$ in Figure \ref{fig:sim_results} below.
Our results are averaged over $5$ repetitions, with shaded regions representing 1 standard deviation measured across repetitions. 
Algorithms \ref{alg:known} and \ref{alg:unknown} are denoted by Network MAB (Known) and  Network MAB (Unknown) respectively.
We discuss both sets of plot separately below. 

\emph{Regret Scaling with $N$.} 
We plot the cumulative regret when  $T = T_{\text{max}}$ for $N = 9$ in Figure \ref{fig:sim_results} (a). 
Classical MAB algorithms such as UCB see an exponential growth in the regret as $N$ increases. 
Both Algorithm \ref{alg:known} and Algorithm \ref{alg:unknown} have much milder scaling with $N$.
Algorithm \ref{alg:known} uses $\calG$ to reduce the ambient dimension of the regression, hence suffering less dependence on $N$ as compared to Algorithm \ref{alg:unknown}.

\emph{Regret Scaling with $T$.} We plot the cumulative regret for $N = 9$ in Figure \ref{fig:sim_results} (b). 
Despite the poorer scaling of our regret bounds with $T$, our algorithms lead to significantly better regret than UCB which takes a large horizon to converge.
Algorithm \ref{alg:known} is able to end its exploration phase earlier than  algorithm \ref{alg:unknown} since it does not need additional samples to learn the sparsity unlike the Lasso.

\begin{figure}[htbp]
    \centering
    \begin{minipage}{0.5\textwidth}
        \centering
        \includegraphics[width=1\textwidth]{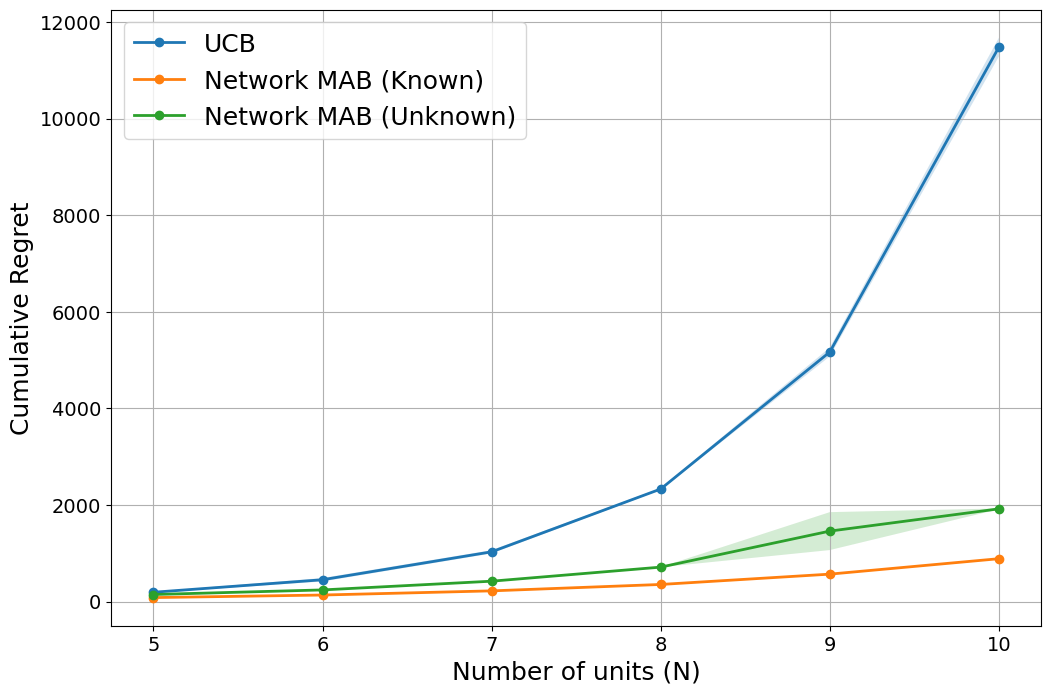}  
        \caption*{(a) Cumulative regret vs number of units $N$.}
    \end{minipage}\hfill
    % Remove or comment out this line
    \begin{minipage}{0.5\textwidth}
        \centering
        \includegraphics[width=1\textwidth]{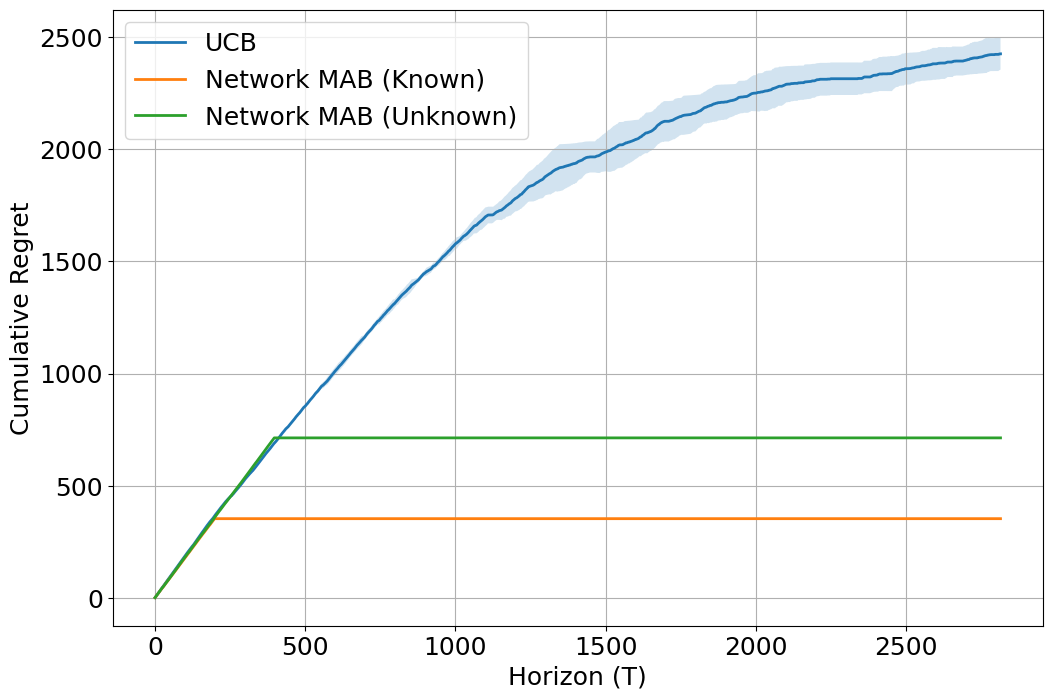} % second figure itself
        \caption*{(b) Cumulative regret scaling vs horizon $T$.}
    \end{minipage}
    \caption{We simulate rewards via a sparse network interference pattern, and plot the cumulative regret as a function of $N$ and $T$. Our Network MAB algorithms out-perform UCB, irrespective of knowledge of $\calG$, and does not suffer exponential dependence in number of units $N$. The results also confirm our theoretical results that knowledge of $\calG$ leads Algorithm \ref{alg:known} to have milder dependence in $N$ and better regret than Algorithm \ref{alg:unknown}. }
    \label{fig:sim_results}
\end{figure}

\section{Conclusion}
\label{sec:conc}

This paper introduces a framework for regret minimization in MABs with network interference, a ubiquitous problem in practice. 
We study this problem under a natural sparsity assumption on the interference pattern and provide simple algorithms both when the network graph is known and unknown. 
Our analysis establishes low regret for these algorithms and numerical simulations corroborate our theoretical findings. 
The results in this paper also significantly generalize previous works on MABs with network interference by allowing for arbitrary and unknown (neighbourhood) interference, as well as comparing to a combinatorially more difficult optimal policy. 
This paper also suggests future directions for research such as designing algorithms that achieve optimal dependence in $T$ without paying any cost in $N$ or other problem parameters.  
Establishing lower bounds to understand optimal algorithms will also be valuable future work. 
Further extensions could also include considering interference in contextual bandits or reinforcement learning problems. 
We also hope this work serves as a bridge between online learning and discrete Fourier analysis.

\bibliography{bib.bib}
\bibliographystyle{plainnat}
\newpage
\appendix
\section{Proof of Proposition \ref{prop:sparse_linear_model}}
\label{app:sparsity}

By the discussion in Section \ref{sec:model_and_background}, recall that for any action $\ba \in [\Act]^N$ and unit $n$, the reward can be expressed as $r_n = \langle \btheta_n, \bchi(\ba) \rangle$. 
To establish the proof, it suffices to show that for any $S \subset [N\log_2(\Act)]$ satisfying $S \setminus \calB(n) \neq \emptyset$, $\langle\chi_S, r_n\rangle_B = 0$. Let $i \in S \setminus \calB(n)$ be an arbitrary index, then, we have, 
\begin{align*}
\langle \chi_S, r_n\rangle_B &= \Act^{-N}\sum_{\bx \in \{-1, 1\}^{N\log_2(\Act)}} r_n(\bx) \chi_S(\bx) \\
&= \Act^{-N}\left\{\sum_{\substack{\bx \in \{-1, 1\}^{N\log_2(\Act)} \\ x_i = 1}}r_n(\bx)\chi_S(\bx) + \sum_{\substack{\bx \in \{-1, 1\}^{N\log_2(\Act)} \\ x_i = -1}} r_n(\bx) \chi_S(\bx)\right\} \\
&= \Act^{-N}\left\{\sum_{\substack{\bx \in \{-1, 1\}^{N\log_2(\Act)} \\ x_i = 1}}x_ir_n(\bx)\chi_{S\setminus\{i\}}(\bx) + \sum_{\substack{\bx \in \{-1, 1\}^{N\log_2(\Act)} \\ x_i = -1}} x_ir_n(\bx) \chi_{S\setminus\{i\}}(\bx)\right\} \\
&=\Act^{-N}\left\{\sum_{\substack{\bx \in \{-1, 1\}^{N\log_2(\Act)} \\ x_i = 1}}r_n(\bx)\chi_{S\setminus\{i\}}(\bx) - \sum_{\substack{\bx \in \{-1, 1\}^{N\log_2(\Act)} \\ x_i = -1}} r_n(\bx) \chi_{S\setminus\{i\}}(\bx)\right\}\\
&= 0,
\end{align*}
where the final equality follows from the fact that, by Assumption~\ref{ass:sparse_network_interference}, $r_n(\bx) = r_n(\bx')$ when $\bx$ and $\bx'$ differ only in positions indexed by $i \notin \calB(n)$. Thus, the only subsets $S \subset [N\log_2(\Act)]$ where we can have $\langle r_n, \chi_S\rangle_B \neq 0$ are those satisfying $S \subset \calB(n)$, which proves the desired result.

\section{Proofs for for Known Interference}
\label{app:known}
In this section, we prove Theorem \ref{thm:known}. 
We establish helper lemmas before proving Theorem \ref{thm:known}.  

\subsection{Helper Lemmas}

Recall the following notation before establishing our results. 
We defined $\calB(n) := \{ i \in [N\log_2(\Act)] : i \in [(m - 1)\log_2(\Act) + 1: m\log_2(\Act)] \text{ for } m \in \calN(n)\}$ as the set of indices of the treatment vector $\bv(a) \in \{-1,1\}^{N\log_{2}(\calA)}$ belonging to neighbors $m \in \calN(n)$. 
Additionally, $\vX_n = (\bchi^{\ba_i}(\calB_n): i \in [E]) \in \{-1,1\}^{E \times \Act^s}$, where $\bchi^{\ba}(\calB_n) = (\chi_{S}({\ba}): S \subset \mathcal{B}(n)) \in \{-1,1\}^{\Act^s}$.
For a matrix $\mathbf{A} \in \mathbb{R}^{N \times d}$, let $\sigma_{\min}(\mathbf{A})$ denote its minimum singular value. 
To proceed, we quote the following theorem. 

\begin{lemma} [Theorem 5.41 in \citet{vershynin2018high}]
\label{lem:min_singular_value}
Let $\mathbf{A} \in \mathbb{R}^{N \times d}$ such that its rows $\mathbf{A}_i$ are independent isotropic random vectors in $\mathbb{R}^d$. If $\| \mathbf{A}_i \|_2 \leq \sqrt{m}$ almost surely for all $i \in [N]$, then, with probability at least $1 - \delta$,  one has
\begin{equation*}
    \sigma_{\min}(\mathbf{A}) \geq \sqrt{N} - \sqrt{cm\log(2d/\delta)}
\end{equation*}
for universal constant $c > 0$.
\end{lemma}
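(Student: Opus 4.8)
The plan is to reduce the claim to a concentration statement for the smallest eigenvalue of the Gram matrix $\mathbf{A}^\top \mathbf{A}$ and then invoke a matrix Chernoff bound. First I would record the two consequences of the hypotheses. Writing $\mathbf{A}^\top\mathbf{A} = \sum_{i=1}^N \mathbf{A}_i \mathbf{A}_i^\top$ as a sum of independent rank-one positive semidefinite matrices, isotropy of the rows gives $\E[\mathbf{A}_i\mathbf{A}_i^\top] = I_d$, hence $\E[\mathbf{A}^\top\mathbf{A}] = N I_d$ and in particular $\mu_{\min} := \lambda_{\min}(\E[\mathbf{A}^\top\mathbf{A}]) = N$. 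The almost-sure bound $\|\mathbf{A}_i\|_2 \leq \sqrt{m}$ supplies the uniform operator-norm control $\lambda_{\max}(\mathbf{A}_i\mathbf{A}_i^\top) = \|\mathbf{A}_i\|_2^2 \leq m =: L$, which is exactly where the boundedness assumption enters.

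Second, I would apply Tropp's matrix Chernoff (lower-tail) inequality to the summands $X_i = \mathbf{A}_i\mathbf{A}_i^\top$. Using $\tfrac{e^{-\epsilon}}{(1-\epsilon)^{1-\epsilon}} \leq e^{-\epsilon^2/2}$ for $\epsilon \in [0,1)$, this yields
\[
\P\left[\lambda_{\min}(\mathbf{A}^\top\mathbf{A}) \leq (1-\epsilon)N\right] \leq d \exp\!\left(-\epsilon^2 N/(2m)\right).
\]
Choosing $\epsilon = \sqrt{2m\log(d/\delta)/N}$ makes the right-hand side at most $\delta$, so with probability at least $1 - \delta$ we obtain $\lambda_{\min}(\mathbf{A}^\top\mathbf{A}) \geq (1-\epsilon)N = N - \sqrt{2mN\log(d/\delta)}$.

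Third, I would convert from $\lambda_{\min}(\mathbf{A}^\top\mathbf{A})$ to $\sigma_{\min}(\mathbf{A}) = \sqrt{\lambda_{\min}(\mathbf{A}^\top\mathbf{A})}$. When $\epsilon \geq 1$ the claimed lower bound is nonpositive and holds trivially; otherwise, setting $\Delta = \sqrt{2mN\log(d/\delta)}$ and using $\sqrt{N - \Delta} \geq \sqrt{N} - \Delta/\sqrt{N}$ (equivalently $\sqrt{1-x} \geq 1-x$ on $[0,1]$),
\[
\sigma_{\min}(\mathbf{A}) \geq \sqrt{N} - \frac{\sqrt{2mN\log(d/\delta)}}{\sqrt{N}} = \sqrt{N} - \sqrt{2m\log(d/\delta)},
\]
which is of the stated form with a universal constant $c$ (absorbing the $2d$ versus $d$ in the logarithm).

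The main obstacle is obtaining the sharp logarithmic dimension dependence $\sqrt{m\log d}$ rather than $\sqrt{md}$. An elementary $\epsilon$-net argument — bounding $\sigma_{\min}(\mathbf{A}) = \min_{x \in S^{d-1}}\|\mathbf{A}x\|_2$ by scalar Bernstein concentration of $\|\mathbf{A}x\|_2^2$ over a net of the sphere — fails to give this rate, since covering $S^{d-1}$ costs $e^{O(d)}$ points and the union bound degrades the deviation by a factor polynomial in $\sqrt{d}$. The improvement to a merely logarithmic factor is precisely what the matrix Chernoff/Bernstein machinery buys, because the dimension enters through the trace (intrinsic dimension) rather than a net cardinality. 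Thus the crux is recognizing that one should concentrate the entire matrix $\mathbf{A}^\top\mathbf{A}$ at once; everything after that is bookkeeping of constants and the square-root conversion. (As a consistency check, isotropy forces $m \geq \E\|\mathbf{A}_i\|_2^2 = d$, so the bound is informative only when $N \gtrsim m\log(2d/\delta) \geq d$, i.e.\ in the tall-matrix regime where $\sigma_{\min}$ is genuinely positive.)
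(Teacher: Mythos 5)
Your proof is correct, but note that the paper does not actually prove this lemma: it is quoted as a known result (Theorem 5.41 in \citet{vershynin2018high}), so the comparison is between your self-contained derivation and the textbook proof behind the citation. Your route --- writing $\mathbf{A}^\top\mathbf{A} = \sum_{i}\mathbf{A}_i\mathbf{A}_i^\top$, applying Tropp's matrix Chernoff lower tail with $\mu_{\min} = N$ and $L = m$, choosing $\epsilon = \sqrt{2m\log(d/\delta)/N}$, and converting to singular values via $\sqrt{1-x}\geq 1-x$ --- is valid, and your handling of the degenerate case $\epsilon \geq 1$ (where the claimed bound is nonpositive and hence vacuous) is the right bookkeeping; the constant you obtain ($c=2$, with $\log(d/\delta) \leq \log(2d/\delta)$) matches the stated form. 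Vershynin's own proof instead applies the matrix Bernstein inequality to the centered sum $\sum_i (\mathbf{A}_i\mathbf{A}_i^\top - I_d)$, which yields the two-sided conclusion $\sqrt{N} - t\sqrt{m} \leq \sigma_{\min}(\mathbf{A}) \leq \sigma_{\max}(\mathbf{A}) \leq \sqrt{N} + t\sqrt{m}$; since the paper only ever uses the lower tail (in Lemma \ref{lem:minimum_fourier_eigenvalue}), your one-sided Chernoff argument suffices and is somewhat more economical, at the cost of not recovering the upper bound on $\sigma_{\max}$. One small imprecision in your closing commentary: in the standard matrix Chernoff bound the ambient dimension $d$ enters as a multiplicative prefactor on the failure probability (hence $\log d$ after inversion), not through the trace or intrinsic dimension --- intrinsic-dimension refinements exist but are not what you invoked, and this remark does not affect the correctness of the proof itself.
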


\begin{lemma} [Minimum Eigenvalue of Fourier Characteristics] 
\label{lem:minimum_fourier_eigenvalue}
There exists a positive constant $C_4 > 0$ such that if $E \geq C_4\calA^s\log(2\calA^s/\delta)$, then, 
\begin{equation*}
    \sigma_{\min}\left( \frac{\vX^T_n \vX_n }{E} \right) \geq \frac{1}{2},
\end{equation*}
with probability at least $1-\delta$.
    
\end{lemma}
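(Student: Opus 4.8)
The plan is to recognize this as a direct application of the non-asymptotic minimum-singular-value bound quoted as Lemma~\ref{lem:min_singular_value}, once we verify that the rows of $\vX_n$ are independent isotropic vectors with a deterministic norm bound. Each row of $\vX_n$ is $\bchi^{\ba_i}(\calB_n) = (\chi_S(\ba_i) : S \subset \calB(n)) \in \{-1,1\}^{\Act^s}$, and since the actions $\ba_1,\dots,\ba_E$ are sampled i.i.d.\ from $\calU([\Act]^N)$, the rows are i.i.d.\ as well. So the only two things I would need to check to invoke Lemma~\ref{lem:min_singular_value} with $d = m = \Act^s$ and ambient sample size $E$ are: (i) isotropy of each row, and (ii) an almost-sure bound on each row's Euclidean norm.

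For isotropy, I would compute the $(S,S')$ entry of $\E\big[\bchi^{\ba}(\calB_n)(\bchi^{\ba}(\calB_n))^\top\big]$ for a uniform action $\ba$, which is exactly $\E_{\ba \sim \calU}[\chi_S(\ba)\chi_{S'}(\ba)] = \langle \chi_S, \chi_{S'}\rangle_B$. By the orthonormality of the Fourier characters established in Section~\ref{subsec:fourier_background} (equivalently, $\chi_S \chi_{S'} = \chi_{S \triangle S'}$ together with $\E_{\ba \sim \calU}[\chi_T(\ba)] = \one(T = \emptyset)$), this inner product equals $\one(S = S')$. Hence $\E\big[\bchi^{\ba}(\calB_n)(\bchi^{\ba}(\calB_n))^\top\big] = I_{\Act^s}$, i.e.\ the rows are isotropic in $\R^{\Act^s}$. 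The norm bound is immediate and in fact deterministic: each of the $|\{S : S \subset \calB(n)\}| = 2^{|\calB(n)|} = \Act^s$ entries lies in $\{-1,1\}$, so $\|\bchi^{\ba}(\calB_n)\|_2 = \sqrt{\Act^s}$ almost surely, giving $m = \Act^s$.

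With these verified, Lemma~\ref{lem:min_singular_value} yields, with probability at least $1-\delta$, the bound $\sigma_{\min}(\vX_n) \geq \sqrt{E} - \sqrt{c\,\Act^s \log(2\Act^s/\delta)}$ for a universal constant $c$. I would then convert this to the claimed eigenvalue bound: since $\vX_n^\top \vX_n / E$ is symmetric positive semidefinite, $\sigma_{\min}(\vX_n^\top \vX_n / E) = \sigma_{\min}(\vX_n)^2 / E$, so it suffices to show $\sigma_{\min}(\vX_n) \geq \sqrt{E/2}$. Requiring $\sqrt{c\,\Act^s \log(2\Act^s/\delta)} \leq (1 - 1/\sqrt{2})\sqrt{E}$ and squaring shows that taking $C_4 := c/(1 - 1/\sqrt{2})^2$ and assuming $E \geq C_4 \Act^s \log(2\Act^s/\delta)$ forces $\sigma_{\min}(\vX_n) \geq \sqrt{E} - (1-1/\sqrt{2})\sqrt{E} = \sqrt{E}/\sqrt{2}$, hence $\sigma_{\min}(\vX_n^\top \vX_n / E) \geq 1/2$, as desired.

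I expect the only genuinely substantive step to be the isotropy computation, and even that is short because the orthonormality relation $\langle \chi_S, \chi_{S'}\rangle_B = \one(S=S')$ has already been recorded in the Fourier background; the remaining work is the bookkeeping needed to match Lemma~\ref{lem:min_singular_value}'s hypotheses ($d=m=\Act^s$, the sample count being $E$, and the deterministic norm) and the elementary algebra fixing $C_4$. A minor point to handle carefully is that Lemma~\ref{lem:min_singular_value} is only meaningful once $E \gtrsim \Act^s \log(2\Act^s/\delta)$, which is precisely the regime imposed by the hypothesis on $E$, so no additional case analysis is required.
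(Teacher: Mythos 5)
Your proposal is correct and follows essentially the same route as the paper's proof: verifying isotropy of the rows via the orthonormality relation $\langle \chi_S, \chi_{S'}\rangle_B = \one(S = S')$, bounding the row norm by $\sqrt{\Act^s}$, invoking Lemma~\ref{lem:min_singular_value}, and converting via $\sigma_{\min}(\vX_n^\top \vX_n) = \sigma_{\min}(\vX_n)^2$. Your version is in fact slightly more explicit in the final algebra (giving $C_4 = c/(1-1/\sqrt{2})^2$ concretely), where the paper simply absorbs this into an unspecified constant.
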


\begin{proof} We begin by showing the conditions for Lemma \ref{lem:min_singular_value} are satisfied. 
First, we prove $\vX_n$ is isotropic, i.e., $\E[\bchi^{\ba}(\calB_n)\left(\bchi^{\ba}(\calB_n)\right)^T] = \mathbf{I}_{\calA^s}$, where the expectation is taken over uniformly sampling actions $\ba$ uniformly and random from $[\Act]^N$.
This follows since for any two subsets $S,S' \subset [N\log_2(\calA)]$,
\begin{align*}
    \E[\chi_{S}(\ba)\chi_{S'}(\ba)] & = \frac{1}{\Act^N} \sum_{\ba \in \Act^N} \chi_{S}(\ba)\chi_{S'}(\ba) \\
    & = \langle \chi_{S}, \chi_{S'} \rangle_{B} = \mathbbm{1}[S = S']
\end{align*}
Since, $\bchi^{\ba}(\calB_n) \in \{-1,1\}^{\calA^s}$ for all actions $\ba \in [\Act]^N$, $\| \bchi^{\ba}(\calB_n) \|_2 \leq \sqrt{\calA^s}$. 
Hence, by Lemma \ref{lem:min_singular_value}, $\sigma_{\min}(\vX_n) \geq \sqrt{E} - \sqrt{c\log(2\calA^s/\delta)\calA^s}$. 
Next, using the fact that $\sigma_{\min}(\vX^T_n \vX_n) = \sigma^2_{\min}(\vX_n)$, we get that 
\begin{equation*}
    \sigma_{\min}\left(\frac{\vX^T_n \vX_n}{E}\right) \geq \frac{E -2\sqrt{cE{\calA^s\log(2\calA^s/\delta)}}}{E}. 
\end{equation*}
Finally, plugging in $\calA^s\log(2\calA^s/\delta) \leq E/C$ for an appropriate $C$ gives us the claimed result. 
\end{proof}

We quote the following theorem regarding the $\| \cdot \|_2$ error of $\wh{\btheta}_n$. 

\begin{lemma} 
\label{lem:OLS_l2_bound}
[Theorem 2.2 in \citet{rigollet2023high}] Assume that $\vY = \vX \btheta^{\ast} + \epsilon$, where $\epsilon$ is $1$ sub-Gaussian, where $\vX \in \mathbb{R}^{E \times d}$. If $d \leq E$, and covariance matrix $\mathbf{\Sigma}_X = (\vX^T \vX)/E$ has rank $d$, then we have with probability at least $1-\delta$, 
\begin{equation*}
    \| \vX\btheta^{\ast} - \vX\wh{\btheta} \|_2 \leq C_1\sqrt{\frac{d + \log(1/\delta)}{E}},
\end{equation*}
where $\wh{\btheta} = \arg\min_{\btheta \in \mathbb{R}^d}  \|\vY - \vX\btheta \|^2_2$ is the least squares estimator, and $C_1 > 0$ is a positive universal constant. 
\end{lemma}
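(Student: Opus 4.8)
The plan is to exploit the fact that ordinary least squares produces the orthogonal projection of $\vY$ onto the column space of $\vX$, so that the prediction error collapses exactly to the projection of the noise onto a $d$-dimensional subspace. Write $\mathcal{V} := \mathrm{col}(\vX) \subseteq \R^E$. Since $\bm{\Sigma}_X = \vX^T\vX/E$ has rank $d$ and $d \le E$, the matrix $\vX$ has full column rank, so $\mathcal{V}$ is genuinely $d$-dimensional and carries an orthogonal projector $P_{\mathcal{V}}$ with $\mathrm{tr}(P_{\mathcal{V}}) = d$. By the normal equations $\vX\wh{\btheta} = P_{\mathcal{V}}\vY$. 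The key observation is that $\vX\btheta^{\ast} \in \mathcal{V}$, so $P_{\mathcal{V}}\vX\btheta^{\ast} = \vX\btheta^{\ast}$, and substituting $\vY = \vX\btheta^{\ast} + \epsilon$ gives the exact identity $\vX\wh{\btheta} - \vX\btheta^{\ast} = P_{\mathcal{V}}(\vX\btheta^{\ast} + \epsilon) - \vX\btheta^{\ast} = P_{\mathcal{V}}\epsilon$. (An equivalent route avoiding projectors is the ``basic inequality'': optimality of $\wh{\btheta}$ gives $\norm{\vX\wh{\btheta} - \vX\btheta^{\ast}}_2^2 \le 2\langle\epsilon, \vX\wh{\btheta} - \vX\btheta^{\ast}\rangle$, and Cauchy--Schwarz against $P_{\mathcal{V}}\epsilon$ yields $\norm{\vX\wh{\btheta} - \vX\btheta^{\ast}}_2 \le 2\norm{P_{\mathcal{V}}\epsilon}_2$; the direct identity is tighter and cleaner.)

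Thus it remains to control $\norm{P_{\mathcal{V}}\epsilon}_2$. Fixing an orthonormal basis $u_1, \dots, u_d$ of $\mathcal{V}$, we have $\norm{P_{\mathcal{V}}\epsilon}_2^2 = \sum_{j=1}^d \langle u_j, \epsilon\rangle^2$. By Assumption~\ref{ass:noise} the coordinates $\epsilon_1, \dots, \epsilon_E$ are independent and $1$-sub-Gaussian, so each $\langle u_j, \epsilon\rangle = \sum_i u_{ji}\epsilon_i$ is a $1$-sub-Gaussian scalar (its sub-Gaussian parameter is bounded by $\norm{u_j}_2 = 1$), and $\E\norm{P_{\mathcal{V}}\epsilon}_2^2 = \mathrm{tr}(P_{\mathcal{V}}\,\E[\epsilon\epsilon^T]) \le \mathrm{tr}(P_{\mathcal{V}}) = d$. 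The target is a high-probability bound of the form $\norm{P_{\mathcal{V}}\epsilon}_2^2 \le C(d + \log(1/\delta))$.

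The main obstacle is that the $\langle u_j, \epsilon\rangle$ are in general \emph{not} independent, so the clean chi-squared tail (Laurent--Massart) is unavailable for merely sub-Gaussian noise. I would resolve this in one of two standard ways. First, one may view $\norm{P_{\mathcal{V}}\epsilon}_2^2 = \epsilon^T P_{\mathcal{V}}\epsilon$ as a quadratic form and invoke the Hanson--Wright inequality, which for a sub-Gaussian vector gives $\P(\epsilon^T P_{\mathcal{V}}\epsilon > d + t) \le \exp(-c\min\{t^2/\norm{P_{\mathcal{V}}}_F^2,\, t/\norm{P_{\mathcal{V}}}_{\mathrm{op}}\})$; since $\norm{P_{\mathcal{V}}}_F^2 = \mathrm{tr}(P_{\mathcal{V}}) = d$ and $\norm{P_{\mathcal{V}}}_{\mathrm{op}} = 1$, choosing $t \asymp d + \log(1/\delta)$ yields the claim. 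Alternatively, one can use the variational identity $\norm{P_{\mathcal{V}}\epsilon}_2 = \sup_{v \in \mathcal{V},\, \norm{v}_2 = 1}\langle v, \epsilon\rangle$, take a $1/2$-net $\mathcal{N}$ of the unit sphere of $\mathcal{V}$ with $|\mathcal{N}| \le 5^d$, bound $\langle v, \epsilon\rangle$ at each net point by the scalar sub-Gaussian tail, union bound over $\mathcal{N}$, and absorb the discretization error via the standard net-to-sphere argument; this again produces $\norm{P_{\mathcal{V}}\epsilon}_2 \le C\sqrt{d + \log(1/\delta)}$ with probability at least $1 - \delta$.

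Combining the exact identity with this concentration bound gives $\norm{\vX\wh{\btheta} - \vX\btheta^{\ast}}_2 \le C\sqrt{d + \log(1/\delta)}$; passing to the per-sample (empirical $L^2$) prediction error, i.e.\ normalizing by $\sqrt{E}$, produces the displayed bound $C_1\sqrt{(d + \log(1/\delta))/E}$. The hypotheses enter exactly where expected: $d \le E$ together with $\mathrm{rank}(\bm{\Sigma}_X) = d$ guarantees $\dim(\mathcal{V}) = d$ and $\mathrm{tr}(P_{\mathcal{V}}) = d$, which fixes both the expectation and the Hanson--Wright variance proxy, while the sub-Gaussianity of $\epsilon$ drives the tail. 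Note that uniqueness of $\wh{\btheta}$ is irrelevant, since $\vX\wh{\btheta} = P_{\mathcal{V}}\vY$ is uniquely determined regardless.
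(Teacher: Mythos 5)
Your proposal is correct, but note that the paper does not actually prove this lemma: it is imported verbatim as Theorem 2.2 of \citet{rigollet2023high}, so the only ``proof'' in the paper is the citation, plus the rearrangement stated immediately afterwards that converts the prediction-error bound into a bound on $\norm{\btheta^\ast - \wh{\btheta}}_2$ via $\sigma_{\min}(\mathbf{\Sigma}_X)$. Your argument is essentially a reconstruction of the standard proof of that cited theorem. The branch using the variational identity $\norm{P_{\mathcal{V}}\epsilon}_2 = \sup_{v}\langle v,\epsilon\rangle$ over the unit sphere of the $d$-dimensional column space, with a $1/2$-net of cardinality at most $5^d$, is essentially line for line the proof in the cited reference; the exact projection identity $\vX\wh{\btheta} - \vX\btheta^\ast = P_{\mathcal{V}}\epsilon$ (in place of the basic inequality, which costs a factor of $2$) and the Hanson--Wright alternative are clean variants that buy nothing essential but are equally valid, and your observation that the coordinates of $P_{\mathcal{V}}\epsilon$ in an orthonormal basis of the column space are not independent --- so a Laurent--Massart chi-squared tail is unavailable for merely sub-Gaussian noise --- is exactly the reason the standard proof resorts to a net (or to Hanson--Wright). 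One further point in your favor: as literally written, the lemma's display is internally inconsistent, since an unnormalized $\ell_2$ prediction error cannot be bounded by $\sqrt{(d + \log(1/\delta))/E}$; the correct unnormalized bound is $C_1\sqrt{d + \log(1/\delta)}$, and the intended reading --- confirmed by the rearrangement the paper performs right after the lemma --- is the empirical norm $E^{-1/2}\norm{\vX\btheta^\ast - \vX\wh{\btheta}}_2$, which is precisely what your final normalization step produces.
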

While the above lemma bounds the mean-squared error the least-squares estimate, in our applications we can about bounding the $\ell_2$ distance between $\btheta^\ast$ and $\wh{\btheta}$. Simple rearrangement on the above implies that, with probability at least $1- \delta$, we actually have

\[
\| \btheta^{\ast} - \wh{\btheta} \|_2 \leq C_1\sqrt{\frac{d + \log(1/\delta)}{E\cdot \sigma_{\min}(\mathbf{\Sigma}_X)}}.
\]
If, in particular, $\sigma_{\min}\left(\frac{\vX^\top \vX}{E}\right) \geq 1/2$, the above can be simplified to
\[
\|\btheta^\ast - \wh{\btheta}\|_2 \leq C_2\sqrt{\frac{d + \log(1/\delta)}{E}}
\]
with probability at least $1 - \delta$ for some new, appropriate universal constant $C_2 > 0$.

\subsection{Proof of Theorem \ref{thm:known}}

\begin{proof}
Recall the notation $\wh{\btheta} = N^{-1}\sum^N_{n=1} \wh{\btheta}_n$, and $\wh{\ba} =  \arg\max_{\ba \in [\Act]^N} \langle \wh{\btheta}, \bchi({\ba}) \rangle$. 
The average reward $\wb{r}(\wh{\ba})$ can be bounded using the definition of $\wh{\ba}$ and Holder's inequality as follows,
\begin{align*}
    \wb{r}(\ba^{\ast}) - \wb{r}(\wh{\ba}) &= \langle  
     \btheta, \bchi({\ba^{\ast}}) - \bchi({\wh{\ba}}) \rangle \\
    &= \langle  \btheta - \wh{\btheta}, \bchi(\ba^{\ast}) - \bchi({\wh{\ba}})\rangle + \underbrace{\langle \wh{\btheta}, \bchi(\ba^{\ast}) - \bchi(\wh{\ba})\rangle}_{\leq 0} \\
    & \leq \langle  \btheta - \wh{\btheta}, \bchi(\ba^{\ast}) - \bchi(\wh{\ba})\rangle \\
    & = \frac{1}{N} \sum^N_{i=1} \langle \btheta_n - \wh{\btheta}_n, \bchi(\ba^{\ast}) - \bchi(\wh{\ba})\rangle \\
    & = \frac{1}{N} \sum^N_{i=1} \langle \btheta_n - \wh{\btheta}_n, \bchi^{\ba^{\ast}}(\calB_n) - \bchi^{\wh{\ba}}(\calB_n)\rangle \\
    & \leq \frac{1}{N} \sum^N_{i=1} \| \btheta_n - \wh{\btheta}_n \|_2 \| \bchi^{\ba^{\ast}}(\calB_n) - \bchi^{\hat{\ba}}(\calB_n) \|_2
\end{align*}
Using $\|\bchi^{\ba}(\calB_n)\|_2 \leq \sqrt{\calA^s}$ then gives us 
\begin{equation}
     \wb{r}(\ba^{\ast}) - \wb{r}(\wh{\ba}) \leq \frac{\sqrt{\calA^s}}{N} \sum^N_{i=1} \| \btheta_n - \wh{\btheta}_n \|_2 \label{eq:intermediate_known_regret_bound}
\end{equation}
Next, define ``good'' events for any unit $n \in [N]$ as 
\[
G_{n1} := \left\{\sigma_{\min}\left( \frac{\vX^T_n \vX_n }{E} \right) \geq \frac{1}{2} \right\}\;\; \text{and} \;\; G_{n2} := \left\{\|\wh{\btheta}_n - \btheta_n\|_2 \leq C_2\sqrt{E^{-1}\left[\calA^s 
 + \log\left(\frac{4N\calA^s}{\delta}\right)\right] }\right\},
\]
where $C_2 > 0$ is as stated above. Notice that there exists a sufficiently large universal constant $C_3 >0$, such that $T  \geq C_3\left(A^{2s}[\log(2N/\delta) + s\log(\calA) ]\right)$ implies $E =(T\Act^s)^{2/3}\left[\log\left(\frac{N}{\delta}\right) + s\log\left(\calA\right)\right]^{1/3} \geq C_4\calA^s \log(4N\calA^s/\delta)$. 
Hence, for any given $n \in [N]$, we have via Lemma \ref{lem:minimum_fourier_eigenvalue} that $\calG_{n1}$ holds with probability $1 - \frac{\delta}{2N}$
Conditioned on $\calG_{n1}$, we get that $\P(\calG_{n2}|\calG_1) \geq 1 - \frac{\delta}{2N}$.
Summarizing, we get that for any $n \in [N]$, the following holds 
\begin{equation*}
    \|\wh{\btheta}_n - \btheta_n\|_2 \leq C_2\sqrt{E^{-1}\left[\calA^s + \log\left(\frac{4N\calA^s}{\delta}\right)\right] } \leq C_5\sqrt{E^{-1}\Act^s\log\left(\frac{4N\Act^s}{\delta}\right)},
\end{equation*}
with probability at least $\left(1-\frac{\delta}{2N}\right)^2 \geq 1 - \delta/N$, where $C_5 > 0$ is an appropriate constant. 
Taking a union bound over all $N$ units, and then substituting into \eqref{eq:intermediate_known_regret_bound} gives us 
\begin{equation*}
     \wb{r}(\ba^{\ast}) - \wb{r}(\wh{\ba})  \leq C_5\calA^s\sqrt{E^{-1}\log\left(\frac{4N\calA^s}{\delta}\right)}
\end{equation*}

Finally, using this, the cumulative regret can be upper bounded with probability $1 - \delta$ as follows:
\begin{align}
    \Reg_T &= \sum_{t = 1}^T\left( \bar{r}(\ba^\ast) - \bar{r}(\hat{\ba})\right) \nonumber \\
    &= \sum_{t = 1}^E \left(\bar{r}(\ba^\ast) - \bar{r}(\hat{\ba})\right) + \sum_{t = E + 1}^T \left(\bar{r}(\ba^\ast) - \bar{r}(\hat{\ba})\right) \nonumber \\
     &\leq E + C_5T \calA^s\sqrt{E^{-1}\log\left(\frac{4N\calA^s}{\delta}\right)} \nonumber
\end{align}
Substituting $E$ as in the theorem statement completes the proof. 
\end{proof}

\section{Proofs for Unknown Interference}
\label{app:unknown}

In this appendix, we prove Theorem~\ref{thm:unknown}. Our proof requires the following lemmas. 

\subsection{Helper Lemmas for Theorem \ref{thm:unknown}}
\label{subsec:helper_lemmas_unknown}

The first lemma we prove details the (high-probability) incoherence guarantees of the uniformly random design matrix under the Fourier basis.
Recall the following notation before stating and proving our results. 
We denote $E$ as our exploration length, and $\bchi(\ba_t)$ as the Fourier characteristic associated with action $\ba_t \in [\Act]^N$.
Let $\vX = (\bchi(\ba_t): t \in [E]) \in \{-1,1\}^{E \times \calA^N}$
Additionally, we require the following definition of incoherence.

\begin{definition}
\label{def:incoherence}
We say a matrix $\vA \in \R^{E \times d}$ is $s$-incoherent if $\|\vA^\top \vA - \mathbf{I}_d\|_\infty \leq \frac{1}{32 s}$, where $\mathbf{I}_d$ is the identity matrix of dimension $d$. 
\end{definition}

\begin{lemma} [Incoherence of Fourier Characteristics]
\label{lem:incoherence}
    For $E \geq 1$, suppose $\ba_1, \dots, \ba_E \overset{\mathrm{iid}}\sim \calU(\{-1, +1\}^{N\log_2(\calA)})$. 
    Then, 
    \begin{equation*}
    \label{eq:exploration_incoherence}
    \P\left(\left\|\frac{\vX^\top \vX}{E} - I_{\calA^N}\right\|_{\infty} \leq \sqrt{\frac{2\log\left(\frac{2\calA^{2N}}{\delta}\right)}{E}}\right) \geq 1 - \delta, 
    \end{equation*}
    where $\|\vA\|_{\infty} = \max_{i, j}|\vA_{i, j}|$ denotes the maximum coordinates of a matrix.
    Thus, if $E \geq 4096\calA^{2s}\left[\log\left(\frac{2}{\delta}\right) + 2N\log\left(\calA\right)\right]$, $\vX$ is $\calA^s$-incoherent with probability at least $1 - \delta$.
\end{lemma}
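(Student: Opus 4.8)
The plan is to analyze $\frac{\vX^\top\vX}{E}$ entrywise, indexing its rows and columns by subsets $S, S' \subseteq [N\log_2(\calA)]$, so that
\[
\left(\frac{\vX^\top\vX}{E}\right)_{S,S'} = \frac{1}{E}\sum_{t=1}^E \chi_S(\ba_t)\chi_{S'}(\ba_t).
\]
The key observation is that the characters are closed under multiplication: since each coordinate of $\ba_t$ lies in $\{-1,+1\}$, we have $\chi_S(\ba_t)\chi_{S'}(\ba_t) = \chi_{S \triangle S'}(\ba_t)$, where $S \triangle S'$ is the symmetric difference. For $S = S'$ this product is $\chi_\emptyset \equiv 1$, so every diagonal entry equals $1$ deterministically and the diagonal of $\frac{\vX^\top\vX}{E} - \mathbf{I}$ vanishes. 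For $S \neq S'$ the set $S \triangle S'$ is nonempty, so $\E[\chi_{S\triangle S'}(\ba_t)] = \langle \chi_{S \triangle S'}, \chi_\emptyset\rangle_B = 0$ by uniformity of $\ba_t$; thus each off-diagonal entry is an empirical average of i.i.d.\ mean-zero random variables valued in $\{-1,+1\}$.

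First I would apply Hoeffding's inequality to a single off-diagonal entry. Each summand $\chi_{S\triangle S'}(\ba_t)$ is bounded in $[-1,1]$ and mean zero, so for any $u > 0$,
\[
\P\!\left(\left|\frac{1}{E}\sum_{t=1}^E \chi_{S\triangle S'}(\ba_t)\right| \geq u\right) \leq 2\exp\!\left(-\frac{Eu^2}{2}\right).
\]
Next I would union bound over all $\calA^{2N}$ index pairs $(S,S')$ (the diagonal terms contribute nothing, but including them only loosens the bound). Choosing $u = \sqrt{2E^{-1}\log(2\calA^{2N}/\delta)}$ makes the total failure probability $2\calA^{2N}\exp(-Eu^2/2) = \delta$, which yields the claimed uniform bound on $\|\frac{\vX^\top\vX}{E} - \mathbf{I}\|_\infty$ with probability at least $1 - \delta$.

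Finally, for the incoherence claim I would force the right-hand side below the incoherence threshold. Requiring $\|\frac{\vX^\top\vX}{E} - \mathbf{I}\|_\infty \leq \frac{1}{32\calA^s}$ (the $\calA^s$-incoherence condition, read for the normalized design $\vX/\sqrt{E}$) amounts to $\sqrt{2E^{-1}\log(2\calA^{2N}/\delta)} \leq \frac{1}{32\calA^s}$; squaring gives $E \geq 2048\,\calA^{2s}\log(2\calA^{2N}/\delta)$, and expanding $\log(2\calA^{2N}/\delta) = \log(2/\delta) + 2N\log\calA$ shows this holds whenever $E \geq 4096\,\calA^{2s}[\log(2/\delta) + 2N\log\calA]$, as stated.

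As for difficulty, there is no genuine obstacle once the product-of-characters identity is noted: everything reduces to Hoeffding plus a union bound. The only points needing care are the constant bookkeeping—checking that the stated threshold $4096\,\calA^{2s}[\log(2/\delta)+2N\log\calA]$ indeed dominates the $2048\,\calA^{2s}\log(2\calA^{2N}/\delta)$ that falls out of the inequality (it does, with room to spare)—and the \emph{exactness} of the diagonal entries, since it is the fact that the diagonal is identically $1$ (not merely concentrated) that lets the $\ell_\infty$ deviation be controlled purely by the off-diagonal fluctuations.
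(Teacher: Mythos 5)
Your proposal is correct and follows essentially the same route as the paper: index entries of $\frac{\vX^\top\vX}{E}$ by subsets $S, S'$, observe the diagonal is exactly $1$ since $\chi_S^2 \equiv 1$, control each off-diagonal entry (a mean of independent $\pm 1$ variables given by $\chi_{S \triangle S'}$) via Hoeffding, union bound over $\calA^{2N}$ pairs, and then plug the resulting $\ell_\infty$ bound into the incoherence threshold. The only difference is cosmetic: you spell out the constant bookkeeping for the final incoherence claim (which the paper asserts without detail), and you explicitly flag the normalization mismatch in Definition~\ref{def:incoherence}, both of which are fine.
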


\begin{proof}
    Recall that, for any $\ba \in [\Act]^N$, $\bchi(\ba) := (\chi_{S_1}(\ba), \dots, \chi_{S_{\calA^N}}(\ba))$, where $S_1, \dots, S_{\calA^N}$ is some fixed enumeration of subsets  $S \subset [N\log_2(\Act)]$.
    Thus, each entry of $(\vX^\top \vX)/E$ can be viewed as being indexed by subsets $S, S' \subset [N\log_2(\calA)]$.
    %$\frac{\vX^\top \vX}{T}$

    To establish \eqref{eq:exploration_incoherence}, we first examine diagonal elements of  $(\vX^\top \vX)/E$. 
    For $S \subset [N\log_2(\calA)]$, we have
    \begin{align}
    \label{eq:diagonal_term}
    \left(\frac{\vX^\top\vX}{E}\right)_{S, S} &= \frac{1}{E}\left(\sum_{t = 1}^E \bchi(\ba_t) (\bchi(\ba_t))^\top\right)_{S, S} = \frac{1}{E}\sum_{t = 1}^E \chi_S(\ba_t)\chi_S({\ba_t})= 1, 
    \end{align}
    where the last equality follows from the fact that $(\chi_{S}({\ba_t}))^2 = 1$. 

    Next, we consider off-diagonal elements, and bound their magnitude. 
    Before doing so, we require the following. 
    For subsets $S, S' \subset [N\log_2(\Act)]$, let $S \Delta S'$ denote the symmetric difference of two subsets. 
    For any two subsets $S, S' \subset [N\log_2(\calA)]$, the product of their Fourier characteristics is,
    \begin{equation*}
        \chi_{S}({\ba_t}) \chi_{S'}(\ba_t) = \left( \prod_{i \in S} \bv(\ba_t)_i\right)  \left( \prod_{i' \in S'} \bv(\ba_t)_{i'} \right) = \prod_{i \in S \Delta S'} \bv(\ba_t)_i. 
    \end{equation*}

    Using this, for any distinct subsets $S, S'$, we have
    \begin{align*}
    \left(\frac{\vX^\top \vX}{E}\right)_{S, S'} &= \frac{1}{E}\sum_{t = 1}^E \chi_{S}({\ba_t})\chi_{S'}({\ba_t}) = \frac{1}{E}\sum_{t = 1}^E \prod_{i \in S \Delta S'} \bv(\ba_t)_i. 
    \end{align*}
    Since $S \neq S'$, and $\ba_t \sim \mathcal{U}(\{-1,1\}^{N\log_2(\calA)})$, the set of random variables $\{\bv(\ba_t)_i: i \in  S \Delta S'\}$ are independent Rademacher random variables. 
    Applying Hoeffding's inequality for $\epsilon > 0$ gives us
    \begin{equation*}
        \P\left( \left(\frac{\vX^\top \vX}{E}\right)_{S, S'} \geq \epsilon \right)  \leq 2\exp\left(-\frac{E\epsilon^2}{2} \right). 
    \end{equation*}
    Applying the inequality above and taking a union bound over all $\calA^{2N}$ elements of $(\vX^\top \vX)/E$
    \begin{equation*}
        \P\left(\max_{\substack{S, S' \subset [N\log_2(\Act)] \\ S \neq S'}} \left(\frac{\vX^\top \vX}{E}\right)_{S, S'}  \geq  \epsilon \right) \leq 2\calA^{2N}\exp\left(-\frac{E\epsilon^2}{2} \right). 
    \end{equation*}
    Choosing $\epsilon =  \sqrt{2E^{-1}\log\left(\frac{2\calA^{2N}}{\delta}\right)}$ yields, 
    \begin{equation}
    \label{eq:off_diagonal_prob_bound}
        \P\left(\max_{\substack{S, S' \subset [N\log_2(\Act)] \\ S \neq S'}}\left(\frac{\vX^\top \vX}{E}\right)_{S, S'}  \geq  \sqrt{\frac{2\log\left(\frac{2\calA^{2N}}{\delta}\right)}{E}} \right) \leq \delta.
    \end{equation}
    
   To complete the proof, observe that \eqref{eq:diagonal_term} implies that
   \begin{equation*}
       \left\|\frac{\vX^\top \vX}{E} - I_{\calA^N}\right\|_{\infty} = \max_{\substack{S, S' \subset [N\log_2(\Act)] \\ S \neq S'}} \left(\frac{\vX^\top \vX}{E}\right)_{S, S'}. 
   \end{equation*}
   Substituting this observation into \eqref{eq:off_diagonal_prob_bound} above completes the proof. 
\end{proof}

 % with a union bound over the at most $M^{2N}$ elements of $\frac{\vX^\top \vX}{T}$ yields
    % \[
    % \P\left(\exists S, R \subset [N\log_2(M)] : \left|\frac{1}{T}\sum_{t = 1}^T \chi_S(A_t)\chi_R(A_t)\right| \geq \sqrt{\frac{2N\log\left(\frac{2M^{2N}}{\delta}\right)}{T}}\right) \leq \delta.
    % \]

In addition to the above lemma, we leverage the following Lasso convergence result. We state a version that can be found in the book on high-dimensional probability due to \citet{rigollet2023high}.

\begin{lemma} [Theorem 2.18 in \citep{rigollet2023high} ]
\label{lem:lasso}
Suppose that $\vY = \vX \theta^\ast + \epsilon$, where $\vX \in \R^{E \times d}$, $\btheta^\ast \in \R^d$ is $s$-sparse, and $\epsilon$ has independent 1-sub-Gaussian coordinates. Further, suppose $\frac{\vX^\top\vX}{T}$ is $s$-incoherent. Then, for any $\delta \in (0, 1)$ and for $\lambda = 4\sqrt{E^{-1}\log(2d)} + 4\sqrt{E^{-1}\log(\delta^{-1})}$, we have, with probability at least $1 - \delta$
\[
\|\btheta^\ast - \wh{\btheta}\|_2 \leq C\sqrt{sE^{-1}\log\left(\frac{2d}{\delta}\right)}
\]
where $\wh{\btheta}$ denotes the solution to the Lasso and $C > 0$ is some absolute constant.

\end{lemma}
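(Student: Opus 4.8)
The plan is to run the classical primal (``basic inequality'') analysis of the Lasso, using the incoherence hypothesis of Definition~\ref{def:incoherence} in place of the usual restricted-eigenvalue condition. Throughout I write $S := \mathrm{supp}(\btheta^\ast)$, so $|S| \le s$, and set $\Delta := \wh{\btheta} - \btheta^\ast$; the goal is to bound $\norm{\Delta}_2$.

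First I would record the basic inequality. Since $\wh{\btheta}$ minimizes the penalized objective, comparing its value to that at $\btheta^\ast$ and substituting $\vY = \vX\btheta^\ast + \epsilon$ gives, after expanding the square,
\[
\frac{1}{2E}\norm{\vX\Delta}_2^2 \;\le\; \frac{1}{E}\langle \epsilon, \vX\Delta\rangle + \lambda\big(\norm{\btheta^\ast}_1 - \norm{\wh{\btheta}}_1\big).
\]
The next step is to control the stochastic term on a ``good'' event. By H\"older, $\frac{1}{E}\langle \epsilon, \vX\Delta\rangle \le \frac{1}{E}\norm{\vX^\top \epsilon}_\infty \norm{\Delta}_1$, and each coordinate $(\vX^\top\epsilon)_j = \sum_t X_{tj}\epsilon_t$ is $O(\sqrt E)$-sub-Gaussian because incoherence forces the column norms $\norm{\vX_{\cdot j}}_2^2$ to lie within $E/(32s)$ of $E$. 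A sub-Gaussian tail bound and a union bound over the $d$ columns then show $\frac{1}{E}\norm{\vX^\top\epsilon}_\infty \le \lambda/2$ with probability at least $1-\delta$; this is precisely what dictates the stated size $\lambda \asymp \sqrt{E^{-1}\log(2d/\delta)}$.

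On this event I would derive the cone condition and a curvature bound. Using $\norm{\btheta^\ast}_1 - \norm{\wh{\btheta}}_1 \le \norm{\Delta_S}_1 - \norm{\Delta_{S^c}}_1$ together with nonnegativity of the left-hand side of the basic inequality, rearranging yields $\norm{\Delta_{S^c}}_1 \le 3\norm{\Delta_S}_1$, so $\Delta$ lies in the standard cone and $\norm{\Delta}_1 \le 4\sqrt{s}\,\norm{\Delta}_2$. For curvature, write $\frac{1}{E}\norm{\vX\Delta}_2^2 = \norm{\Delta}_2^2 + \Delta^\top\!\big(\tfrac{\vX^\top\vX}{E} - \mathbf{I}_d\big)\Delta$ and bound the perturbation by $\norm{\tfrac{\vX^\top\vX}{E}-\mathbf{I}_d}_\infty \norm{\Delta}_1^2 \le \tfrac{1}{32s}\norm{\Delta}_1^2 \le \tfrac{1}{2}\norm{\Delta}_2^2$, where the final step uses the cone bound $\norm{\Delta}_1^2 \le 16 s \norm{\Delta}_2^2$. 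This gives $\frac{1}{E}\norm{\vX\Delta}_2^2 \ge \frac12\norm{\Delta}_2^2$. Feeding the curvature and cone bounds back into the basic inequality produces $\frac14\norm{\Delta}_2^2 \le \tfrac{3\lambda}{2}\sqrt{s}\,\norm{\Delta}_2$, i.e. $\norm{\Delta}_2 = O(\lambda\sqrt{s}) = O\big(\sqrt{sE^{-1}\log(2d/\delta)}\big)$, which is the claim after absorbing constants into $C$.

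The hard part is the curvature step: incoherence only controls $\vX^\top\vX/E$ entrywise, so I must convert this $\ell_\infty$ bound into a genuine lower bound on the quadratic form $\Delta^\top(\vX^\top\vX/E)\Delta$, which is possible only because the cone condition restricts $\Delta$ to directions where $\norm{\Delta}_1^2 \lesssim s\norm{\Delta}_2^2$. The interplay between the $1/(32s)$ tolerance in Definition~\ref{def:incoherence} and the cone radius $16s$ is exactly what leaves a positive curvature constant of $1/2$; tracking these constants so the perturbation stays below $\frac12\norm{\Delta}_2^2$ is the delicate bookkeeping. The remaining ingredients---sub-Gaussian concentration for the noise and the cone decomposition---are standard.
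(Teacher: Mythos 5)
The paper never proves this lemma: it is imported verbatim as Theorem 2.18 of \citet{rigollet2023high}, with the surrounding text only adding the follow-up remark that the $\ell_1$ bound $\|\btheta^\ast - \wh{\btheta}\|_1 \leq 4\sqrt{s}\,\|\btheta^\ast - \wh{\btheta}\|_2$ follows from the cone condition. Your proposal supplies a correct, self-contained proof, and it is essentially the standard argument underlying the cited result: the basic inequality; control of $E^{-1}\|\vX^\top \epsilon\|_\infty \leq \lambda/2$ via sub-Gaussian concentration and a union bound over the $d$ columns (which is exactly what dictates $\lambda \asymp \sqrt{E^{-1}\log(2d/\delta)}$); the cone condition $\|\Delta_{S^c}\|_1 \leq 3\|\Delta_S\|_1$, hence $\|\Delta\|_1 \leq 4\sqrt{s}\,\|\Delta\|_2$; and the conversion of entrywise incoherence into restricted curvature via $|\Delta^\top(\vX^\top\vX/E - \mathbf{I}_d)\Delta| \leq \|\Delta\|_1^2/(32s) \leq \tfrac{1}{2}\|\Delta\|_2^2$ on the cone. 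Your constant bookkeeping is right --- the $1/(32s)$ tolerance in Definition~\ref{def:incoherence} against the cone radius $16s$ leaves curvature $1/2$, and then $\tfrac14\|\Delta\|_2^2 \leq \tfrac{3\lambda}{2}\sqrt{s}\,\|\Delta\|_2$ gives $\|\Delta\|_2 = O(\lambda\sqrt{s})$, matching the stated bound. Two minor remarks: the expression $\vX^\top\vX/T$ in the lemma statement is a typo for $\vX^\top\vX/E$, and you interpreted it correctly; and in the paper's actual application the entries of $\vX$ are $\pm 1$ so the column norms equal $\sqrt{E}$ exactly, but your use of the incoherence diagonal to bound them within a constant factor is what makes the proof valid at the stated level of generality.
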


Using standard arguments (see the proof of Theorem 2.18 in \citet{rigollet2023high} or the statement of Theorem 7.3 in \citet{wainwright2019high}), it can be further deduced that $\|\btheta^\ast - \wh{\btheta}\|_1 \leq  4\sqrt{s}\|\btheta^\ast - \wh{\btheta}\|_2$, so we actually have that, for any $\delta \in (0, 1)$, with probability at least $1 - \delta$,
\[
\|\btheta^\ast - \wh{\btheta}\|_1 \leq C s\sqrt{E^{-1}\log\left(\frac{2d}{\delta}\right)},
\]
where $C > 0$ is again some absolute constant.

\subsection{Proof of Theorem \ref{thm:unknown}}
\label{app:unknown_thm_proof}

\begin{proof}
Define $\btheta = N^{-1}\sum^N_{n=1} \btheta_n$. 
Recall the notation $\wh{\btheta} = N^{-1}\sum^N_{n=1} \wh{\btheta}_n$, and $\wh{\ba} =  \arg\max_{\ba \in [\Act]^N} \langle \wh{\btheta}, \bchi({\ba}) \rangle$. 
For any round $t \in \{E + 1, \dots, T\}$, we greedily play the action $\wh{\ba}$. 
The average reward $\wb{r}(\wh{\ba})$ can be bounded using the definition of $\wh{\ba}$ and Holder's inequality as follows,
\begin{align*}
    \wb{r}(\ba^{\ast}) - \wb{r}(\wh{\ba}) &= \langle  
     \btheta, \bchi({\ba^{\ast}}) - \bchi({\wh{\ba}}) \rangle \\
    &= \langle  \btheta - \wh{\btheta}, \bchi(\ba^{\ast}) - \bchi({\wh{\ba}})\rangle + \underbrace{\langle \wh{\btheta}, \bchi(\ba^{\ast}) - \bchi(\wh{\ba})\rangle}_{\leq 0} \\
    & \leq \langle  \btheta - \wh{\btheta}, \bchi(\ba^{\ast}) - \bchi(\wh{\ba})\rangle \\
    & \leq \|\btheta - \wh{\btheta}\|_1 \|\bchi(\ba^{\ast}) - \bchi(\wh{\ba})\|_\infty. 
\end{align*}
Next, substituting the definition of $\btheta, \wh{\btheta}$, $\|\bchi(\ba)\|_\infty = 1$, and using the triangle inequality into the equation above gives us, 
\begin{align}
   \bar{r}(\ba^{\ast}) - \bar{r}(\wh{\ba}) & \leq 2 \|\btheta - \wh{\btheta}\|_1  \leq 2\left\|\frac{1}{N}\sum_{n = 1}^N\left(\btheta_n - \wh{\btheta}_n\right)\right\|_1  \nonumber \\
    &  \leq \frac{2}{N}\sum_{n = 1}^N\left\|\btheta_n - \wh{\btheta}_n\right\|_1 . \label{eq:unknown_exploit_regret_l1_bound}
\end{align}

Let us define the ``good'' events by

\[
G_1 := \left\{\vX \text{ is $\Act^s$-incoherent}\right\}\quad \text{and} \quad G_2 := \left\{ \forall n \in [N], \|\wh{\btheta}_n - \btheta_n\|_1 \leq C\calA^s \sqrt{E^{-1}\log\left(\frac{4N\calA^N}{\delta}\right) }\right\}
\]
where $C > 0$ is the constant following the discussion of Lemma~\ref{lem:lasso}. Let us define the global ``good'' event by $G := G_1 \cap G_2$. We show $\P(G) \geq 1 - \delta$. 

First, there is a universal constant $C' > 0$ such that $E \geq 4096 \Act^{2s}\left[\log(4/\delta) + 2N\log(\Act)\right]$ when $T \geq C'\Act^{2s}[\log(N/\delta) + N\log(\Act)]$. Thus, by Lemma~\ref{lem:incoherence}, we know the matrix $\vX$ with $\bm{\chi}^{\ba_1}, \dots, \bm{\chi}^{\ba_E}$ as its rows is $\Act^s$-incoherent least $1 - \frac{\delta}{2}$, i.e.\ $\P(G_1) \geq 1 - \frac{\delta}{2}$. 
\begin{comment}
Further proceed, note that since $T \geq 4096^{3/2}\left(N \Act^{2s} \log(8 \calA N /\delta)\right)$, we have that, $E = (N\log(8\calA N/\delta))^{1/3}(T\Act^s)^{2/3}$. 
%
Plugging the lower bound on $T$ into the expression for $E$ gives us the conditions for Lemma  \ref{lem:incoherence} are satisfied, and hence $\vX$ is $\calA^s$-incoherent with probability at least $1 - \frac{\delta}{2}$. 
%
\end{comment}

Next, conditioning on $G_1$ and applying Lemma~\ref{lem:lasso} alongside a union bound over the $N$ units yields

\begin{equation*}
 \| \wh{\btheta}_n - \btheta_n \|_1 \leq C\calA^s \sqrt{E^{-1}\log\left(\frac{4N\calA^N}{\delta}\right) }, 
\end{equation*}

for all $n \in [N]$ with probability at least $1 - \frac{\delta}{2}$, i.e.\ $\P(G_2 \mid G_1) \geq 1 - \frac{\delta}{2}$. Thus, in total, we have $\P(G) = \P(G_1) \P(G_2 \mid G_1) \geq (1 - \delta/2)^2 \geq 1 - \delta$. We assume we are operating on the good event $G$ going forward.

Plugging the per-unit $\ell_1$ norms into \eqref{eq:unknown_exploit_regret_l1_bound}, the cumulative regret can be upper bounded with probability $1 - \delta$ as follows:
\begin{align}
    \Reg_T &= \sum_{t = 1}^T\left( \bar{r}(\ba^\ast) - \bar{r}(\hat{\ba})\right) \nonumber \\
    &= \sum_{t = 1}^E \left(\bar{r}(\ba^\ast) - \bar{r}(\hat{\ba})\right) + \sum_{t = E + 1}^T \left(\bar{r}(\ba^\ast) - \bar{r}(\hat{\ba})\right) \nonumber \\
     &\leq E + \frac{2T}{N} \left(\sum_{n = 1}^N\left\|\btheta_n - \wh{\btheta}_n\right\|_1\right) \nonumber \\
    &\leq E + 2CT\cdot \calA^s\sqrt{E^{-1}\log\left(\frac{4N\calA^N}{\delta}\right)} \label{eq:unknown_intermediate_regret}
\end{align}
Clearly, we should select $E$ to roughly balance terms (up to multiplicative constants). In particular, using the choice of $E$ as
\[
E := (T\Act^s)^{2/3}\left[\log\left(\frac{N}{\delta}\right) + N\log(\Act)\right]^{1/3}
\]
and substituting $E$ into \eqref{eq:unknown_intermediate_regret} gives us
\begin{equation*}
    \Reg_T \leq O\left((N\log(\calA N/\delta))^{1/3}(T\calA^s)^{2/3}\right)
\end{equation*}
with probability at least $1 - \delta$, precisely the claimed result.
\begin{comment}
\textbf{Case 2: $T <  4096^{3/2}\left(N \Act^{2s} \log(8\Act N/\delta)\right)$.} 

In this case, we simply plug in the upper bound on $T$ to get, 

\begin{equation*}
     \Reg_T \leq  4096^{3/2}\left(N \Act^{2s} \log(8\Act N/\delta \right)
\end{equation*}

Combining the two cases
\end{comment}
\end{proof}

% \begin{comment}
% \begin{theorem} Let suppose Assumptions~\ref{ass:noise} and~\ref{ass:sparse_network_interference} hold. Assume $\delta \in (0, 1)$ is an arbitrary failure probability, and $T = \Omega(A^{2s}\left[\log(N/\delta) + N \log(\Act)\right])$. Then, Algorithm~\ref{alg:unknown} run with $\lambda = 4\sqrt{E^{-1}\log(2\calA^{N})} + 4\sqrt{E^{-1}\log\left(\frac{2 N}{\delta}\right)}$ where $E := (T\Act^s)^{2/3}\left[\log\left(\frac{N}{\delta}\right) + N\log(\Act)\right]^{1/3}$ satisfies 

% \[
% E = \max\left\{\underbrace{4096\calA^{2s}\left[\log\left(\frac{2}{\delta}\right) + 2N\log\left(\calA\right)\right]}_{T_1}, \underbrace{(N\log(8\calA/\delta))^{1/3}(T\Act^s)^{2/3}}_{T_2}\right\}
% \]

% \[
% \Reg_T = O\left(\left[N\log(\Act/\delta)\right]^{1/3}(T \Act^s)^{2/3}\right)
% \]

% %+ N \Act^{2s}\log(N/\delta)\right)

% with probability at least $1 - \delta$.

% \end{theorem}
% \end{comment}

\section{A Sequential Elimination Algorithm}
\label{app:seq}

In both Sections~\ref{sec:known} and \ref{sec:unknown}, we presented simple ``explore then commit''-style algorithms for obtaining low regret for instances of multi-armed bandits with network interference. The regret bounds for both algorithms had a dependence on the time-horizon $T$ that grew as $T^{2/3}$. We noted that, due to strong lower bounds from the sparse bandit literature~\citep{hao2020high}, this dependence on $T$ was generally unimprovable for the unknown interference case. However, in the known interference setting, there is no reason to believe that the dependence on $T$ obtained by Algorithm~\ref{alg:known} is optimal. In particular, due to classical results in the multi-armed and linear bandits literature, one would expect the optimal dependence on the time horizon to be $\sqrt{T}$. The question considered in this appendix is as follows: can we obtain a $\sqrt{T}$-rate regret bound in the presence of known network interference?

To answer this question, we present Algorithm~\ref{alg:conc_elim}, a sequential elimination-style algorithm that attains an improved regret dependence on the time horizon $T$. We caution that, although this algorithm obtains $\sqrt{T}$ regret, this does not imply it strictly outperforms Algorithm~\ref{alg:known}. In particular, the regret rate presented in Theorem~\ref{thm:seq} has a worse dependence on $N$, the total number of units. For example, when $T = o(N^3)$, Algorithm~\ref{alg:known} offers a better regret guarantee then Algorithm~\ref{alg:conc_elim}.

We provide some brief intuition for how Algorithm~\ref{alg:conc_elim} works. The algorithm, which follows the design of the original sequential elimination algorithm for multi-armed bandits due to \citet{even2006action}, operates over multiple phases or epochs. At the start of each epoch, the algorithm maintains a set of ``global-feasible'' actions $\ba \in [\Act]^N$ which are suspected of offering high reward. Then, during each epoch, the algorithm iterates over each of the $N$ units, and uniformly explores the actions of unit $N$ \textit{and} it's immediate neighbors\footnote{that is, the other units on which the outcome of unit $n \in [N]$ depends.} that are consistent with some globally-feasible action. Once this exploration phase is complete, actions that are suspected of being sub-optimal are eliminated, and the process repeats.

\begin{algorithm}[ht]
   \caption{Sequential Action Elimination}
   \label{alg:conc_elim}
\begin{algorithmic}[1]
    \State \textbf{Input:} Time horizon $T$, epoch sizes $(E_\ell)_{\ell \geq 1}$, directed dependence graph $\calG = ([N], \Epsilon)$, failure probability $\delta \in (0, 1)$.
    \State Compute sparsity $s \gets \max_{n \in [N]}|\calN(n)|$
    \State Set $\calC_1 := [\Act]^N$
    \For{$\ell \geq 1$}
        \For{$n \in [N]$}
            \For{$\ba_n \in [\Act]^{|\calN(n)|}$}
                \State Find an action $\ba  \in \calC_\ell$ such that $\ba_{\calN(n)} = \ba_n$.\textcolor{blue}{\texttt{ // $\ba_{\calN(n)} := \{a_i : i \in \calN(n)\}$}}
                \If{the learner finds $\ba$}
                    \State Play $\ba$ $E_\ell$ times consecutively.
                    \State Set $\wh{\mu}_n^{(\ell)}(\ba_n)$ as the average observed outcome.
                \Else
                    \State Pass
                \EndIf
            \EndFor
        \EndFor
        \State Set $\wh{\mu}^{(\ell)}_G(\ba) := \frac{1}{N}\sum_{n = 1}^N\wh{\mu}^{(\ell)}_n(\ba_{\calN(n)})$ for all $\ba \in \calC_\ell$
        \State Set $\wh{\mu}^{(\ell)}_\ast := \max_{\ba \in \calC_\ell} \wh{\mu}^{(\ell)}_G(\ba)$
        \State Set $\calC_{\ell + 1} := \left\{\ba \in \calA_\ell : \wh{\mu}_G^{(\ell)}(\ba) \geq \wh{\mu}_\ast^{(\ell)} - 2^{-\ell}\right\}$%2\sqrt{\frac{2d\log\left(\frac{4 N}{\delta}\right)}{M_\ell}}\right\}$
    \EndFor
    \State \textbf{Return}
\end{algorithmic}
\end{algorithm}

\begin{theorem}
\label{thm:seq}
Let $T > 0$ and $\delta \in (0, 1)$ be arbitrary, and suppose Assumptions~\ref{ass:noise} and \ref{ass:sparse_network_interference} are satisfied. Let $\calG = ([N], \Epsilon)$ denote the implicit interference graph. Suppose Algorithm~\ref{alg:conc_elim} is run with parameters $T, (E_\ell)_{\ell \geq 1}, \calG,$ and $\delta$, where $(E_\ell)_{\ell \geq 1}$ is given by
\[
E_\ell := \lceil8\cdot 2^{2\ell}\log(2N \Act^s\delta_\ell)\rceil,
\]
where $\delta_\ell := \frac{\delta}{\ell(\ell + 1)}$. Then, with probability at least $1 - \delta$, simultaneously for all $T \geq 1$, we have
\[
\Reg_T = O\left(\sqrt{N\Act^sT}\cdot\log\left(\frac{N \Act^s}{\delta}\log\left(\frac{T}{N\Act^s}\right)\right)\right).
\]
\end{theorem}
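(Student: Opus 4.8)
The plan is to analyze Algorithm~\ref{alg:conc_elim} by establishing a clean, high-probability concentration event across all epochs and all per-unit estimates, and then, conditioned on this event, to bound both the per-epoch regret and the total number of epochs needed. First I would set up the key estimation guarantee. Within epoch $\ell$, each unit $n$ explores all $\Act^{|\calN(n)|} \leq \Act^s$ local configurations of its neighborhood, pulling each consistent action $E_\ell$ times. Since the noise is $1$-sub-Gaussian (Assumption~\ref{ass:noise}) and $r_n$ depends only on $\ba_{\calN(n)}$ (Assumption~\ref{ass:sparse_network_interference}), the estimate $\wh{\mu}_n^{(\ell)}(\ba_n)$ concentrates around the true local mean $\mu_n(\ba_n)$ at rate $\sqrt{\log(\cdot)/E_\ell}$ via Hoeffding. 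The choice $E_\ell = \lceil 8 \cdot 2^{2\ell}\log(2N\Act^s\delta_\ell)\rceil$ is engineered precisely so that, after a union bound over the $\leq N\Act^s$ local estimates in epoch $\ell$, each $|\wh{\mu}_n^{(\ell)} - \mu_n|$ is at most $\approx 2^{-\ell}/4$ with probability $1 - \delta_\ell$. Averaging over units, the global estimate $\wh{\mu}_G^{(\ell)}(\ba) = \frac{1}{N}\sum_n \wh{\mu}_n^{(\ell)}(\ba_{\calN(n)})$ then satisfies $|\wh{\mu}_G^{(\ell)}(\ba) - \bar{r}(\ba)| \leq 2^{-\ell}/4$ for all globally-feasible $\ba$. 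Summing $\sum_\ell \delta_\ell = \delta \sum_\ell \frac{1}{\ell(\ell+1)} = \delta$ via telescoping gives the overall $1-\delta$ guarantee simultaneously across all epochs.

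Next I would establish the two standard invariants of sequential elimination, conditioned on this good event. The first is that the optimal action $\ba^\ast$ is never eliminated: since $\wh{\mu}_G^{(\ell)}(\ba^\ast) \geq \bar{r}(\ba^\ast) - 2^{-\ell}/4 \geq \bar{r}(\wh{\ba}^{(\ell)}) - 2^{-\ell}/4 \geq \wh{\mu}_\ast^{(\ell)} - 2^{-\ell}/2 \geq \wh{\mu}_\ast^{(\ell)} - 2^{-\ell}$, it survives the threshold in line~19. The second invariant is that any action surviving into epoch $\ell+1$ has suboptimality gap $\bar{r}(\ba^\ast) - \bar{r}(\ba)$ controlled by $2^{-\ell}$: if $\ba \in \calC_{\ell+1}$ then $\wh{\mu}_G^{(\ell)}(\ba) \geq \wh{\mu}_\ast^{(\ell)} - 2^{-\ell} \geq \wh{\mu}_G^{(\ell)}(\ba^\ast) - 2^{-\ell}$, and passing through the two-sided concentration bound yields $\bar{r}(\ba^\ast) - \bar{r}(\ba) = O(2^{-\ell})$. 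Thus any action played during epoch $\ell$ incurs instantaneous regret $O(2^{-\ell+1})$.

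I would then bound the cumulative regret by summing over epochs. The length of epoch $\ell$ is at most $N\Act^s E_\ell = O(N\Act^s 2^{2\ell}\log(N\Act^s/\delta_\ell))$ pulls, each incurring regret $O(2^{-\ell})$, so epoch $\ell$ contributes $O(N\Act^s 2^{\ell}\log(\cdot))$ to the regret. Letting $L$ denote the final epoch reached by time $T$, the total exploration budget $\sum_{\ell \leq L} N\Act^s E_\ell = \Theta(N\Act^s 2^{2L}\log(\cdot))$ must be $\Theta(T)$, which gives $2^L = \Theta(\sqrt{T/(N\Act^s\log(\cdot))})$. Summing the per-epoch regret over $\ell \leq L$ is dominated by the last term (geometric sum), yielding $\Reg_T = O(N\Act^s 2^{L}\log(\cdot)) = O(\sqrt{N\Act^s T}\cdot\log(\cdot))$, matching the claimed bound once the logarithmic factor $\log(\frac{N\Act^s}{\delta}\log(\frac{T}{N\Act^s}))$ is tracked carefully (the inner $\log(T/(N\Act^s))$ arises because $L \approx \frac{1}{2}\log_2(T/(N\Act^s))$, so $\delta_L$ carries a $\log L \approx \log\log T$ factor).

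The main obstacle I anticipate is the bookkeeping to make the bound hold \emph{simultaneously for all $T \geq 1$} (an anytime guarantee) rather than for a fixed horizon. This requires the epoch-wise failure probabilities $\delta_\ell$ to be summable, which motivates the $\frac{1}{\ell(\ell+1)}$ weighting, and it requires carefully relating the epoch index $L$ at which the budget is exhausted back to $T$ so that the $2^{-\ell}$ regret scale and the $\log(1/\delta_\ell)$ factor are expressed in terms of $T$. A secondary subtlety is verifying that the exploration within each epoch is actually realizable: for each local configuration $\ba_n$ the algorithm must find a globally-feasible $\ba \in \calC_\ell$ consistent with it, and when it cannot (the \textbf{Else}/\textbf{Pass} branch), one must argue that configurations with no feasible global completion never contribute to the reward of a surviving action, so the missing local estimates are harmless. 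I would handle this by noting that $\wh{\mu}_G^{(\ell)}(\ba)$ for $\ba \in \calC_\ell$ only ever references local estimates $\wh{\mu}_n^{(\ell)}(\ba_{\calN(n)})$ whose configurations are, by construction, consistent with $\ba$ itself and hence were successfully explored.
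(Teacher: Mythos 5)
Your proposal is correct and follows essentially the same route as the paper's proof: per-epoch Hoeffding concentration with a union bound over the at most $N\Act^s$ local estimates and failure probabilities $\delta_\ell = \delta/(\ell(\ell+1))$, the two standard elimination invariants (the optimal action survives; survivors have gap $O(2^{-\ell})$), and a geometric summation over epochs with $2^{L} = \Theta\bigl(\sqrt{T/(N\Act^s)}\bigr)$ yielding the $\sqrt{N\Act^s T}$ rate with the $\log\log T$ factor from $\delta_{L}$. Your closing observation about the \textbf{Pass} branch --- that every local configuration $\ba_{\calN(n)}$ referenced by $\wh{\mu}_G^{(\ell)}(\ba)$ for $\ba \in \calC_\ell$ is realizable by $\ba$ itself and hence was explored --- is a detail the paper's proof leaves implicit, and it is handled correctly in your argument.
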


\begin{proof}
For each $\ell \geq 1$, define the ``good'' event 
\[
G_\ell := \left\{\forall\; n \in [N], \ba_n \in [\Act]^{|\calN(n)|}, \left|\wh{\mu}_n^{(\ell)}(\ba_n) - r_n(\ba)\right| \leq 2^{-(\ell + 1)} \right\}
\]
Observe that standard sub-Gaussian concentration alongside a union bound over units $n \in [N]$ and actions $\ba_n \in [\Act]^{|\calN(n)|}$ yields that, with probability at least $1 - \delta_\ell$, simultaneously for all $n \in [N]$ and $\ba_n \in [\Act]^{|\calN(n)|}$,
\[
\left|\wh{\mu}_n^{(\ell)}(\ba_n) - r_n(\ba)\right| \leq \sqrt{\frac{2}{E_\ell}\log\left(\frac{2N \Act^s}{\delta_\ell}\right)} \leq 2^{-(\ell + 1)}.
\]
In other words, $\P(G_\ell) \geq 1 - \delta_\ell$. Now, define the global ``good'' event
\[
G := \left\{\forall \ell \geq 1, \ba \in [\Act]^n, \left|\wh{\mu}_G(\ba) - \wb{r}(\ba)\right| \leq 2^{-(\ell + 1)}\right\},
\]
and define $G' := \bigcap_{\ell \geq 1}G_\ell$. On the event $G'$, we have
\begin{align*}
|\wh{\mu}_G(\ba) - \wb{r}(\ba)| &= \left|N^{-1}\sum_{n = 1}^N \wh{\mu}_n(\ba_{\calN(n)}) - N^{-1}\sum_{n = 1}^N r_n(\ba)\right|\\
&\leq N^{-1}\sum_{n = 1}^N\left|\wh{\mu}_n(\ba_{\calN(n)}) - r_n(\ba)\right|\\
&\leq N^{-1}\sum_{n = 1}^N2^{-(\ell + 1)} = 2^{-(\ell + 1)},
\end{align*}
and thus we have the inclusion $G' \subset G$. Performing a union bound over the epochs $\ell \geq 1$ yields that
\[
\P(G) \geq \P(G') \geq 1 - \sum_{\ell = 1}^\infty \delta_\ell = 1 - \sum_{\ell = 1}^\infty \frac{\delta}{\ell(\ell + 1)} = 1 - \delta.
\]
We assume we are operating on this ``good'' event $G$ throughout the remainder of the proof. On $G$, we observe that (a) the optimal action $\ba^\ast := \arg\max_{\ba \in [\Act]^N}\wb{r}(\ba)$ satisfies $\ba^\ast \in \calC_{\ell + 1}$ for all $\ell \geq 1$, as we have 
\begin{align*}
\wh{\mu}_\ast^{(\ell)} - \wh{\mu}_G^{(\ell)}(\ba^\ast) = \underbrace{\wh{\mu}_\ast^{(\ell)} - \wb{r}(\ba_\ast^{(\ell)})}_{\leq 2^{-(\ell + 1)}} + \underbrace{\wb{r}(\ba_\ast^{(\ell)}) - \wb{r}(\ba^\ast)}_{\leq 0} + \underbrace{\wb{r}(\ba^\ast) - \wh{\mu}_G^{(\ell)}(\ba^\ast)}_{\leq 2^{-(\ell + 1)}} \leq 2^{-\ell},
\end{align*}
where we have set $\ba_\ast^\ast$ to be any action obtaining the mean estimate $\wh{\mu}^{(\ell)}_\ast$, and (b) $\wb{r}(\ba) \geq \wb{r}(\ba^\ast) - 2^{-\ell}$ for all $\ba \in \calC_{\ell + 1}$, which holds by an analogous argument.

Now, letting $T \geq 1$ be an arbitrary time horizon, and define $L(T) := \min\left\{L : \sum_{\ell = 1}^L N \Act^s E_\ell \geq T\right\}$ to be a lower bound on the largest epoch index to have started by time $T$. We claim that $L(T) \leq L^\ast:= \lceil \frac{1}{2}\log_2\left(6\frac{T}{N\Act^s} - 1\right)\rceil$. In fact, we have 
\begin{align*}
\sum_{\ell = 1}^{L^\ast} N\Act^s E_\ell &= \sum_{\ell = 1}^{L^\ast} N M^s\left\lceil 8\cdot2^{2\ell}\log\left(\frac{2N\Act^s\ell(\ell + 1)}{\delta}\right)\right\rceil \\
&\geq \sum_{\ell = 1}^{L^\ast}  8N\Act^s\cdot2^{2\ell}\log\left(\frac{2N\Act^s\ell(\ell + 1)}{\delta}\right) \\
&\geq \sum_{\ell = 1}^{L^\ast} 8N\Act^s\cdot2^{2\ell} \\
&= 8N\Act^s\cdot \frac{4^{L^\ast + 1} - 4}{3} \\
&\geq T,
\end{align*}
where the third inequality follows from the fact $\log\left(\frac{2N\Act^s \ell(\ell + 1)}{\delta}\right) \geq 1$ for all $\delta \in (0, 1)$ and $\ell, \Act, N, s \geq 1$.

Now, we have everything we need to put together a regret bound. Still operating on the good event $G$, we have 
\begin{align*}
\Reg_T &:= \sum_{t = 1}^T \wb{r}(a^\ast) - \wb{r}(A_t) \leq \sum_{\ell = 1}^{L(T)} 2^{-\ell} N\Act^s E_\ell = \sum_{\ell = 1}^{L(T)}2^{-\ell} N\Act^s\left\lceil 8 \cdot 2^{2\ell}\log\left(\frac{2N\Act^s \ell(\ell + 1)}{\delta}\right)\right\rceil\\
&\leq 16N\Act^s\sum_{\ell = 1}^{L(T)} 2^{\ell}\log\left(\frac{2 N \Act^s \ell(\ell + 1)}{\delta}\right) \leq 16 N\Act^s\log\left(\frac{2 N \Act^sL(T)(L(T) + 1)}{\delta}\right)\sum_{\ell = 1}^{L(T)}2^{\ell} \\
&\leq 16N\Act^s\log\left(\frac{2 N \Act^sL(T)(L(T) + 1)}{\delta}\right) (2^{L(T)} - 2) \leq 16N\Act^s\log\left(\frac{2 N \Act^sL^\ast(L^\ast + 1)}{\delta}\right)(2^{L^\ast} - 2) \\
&\leq 16N\Act^s\log\left(\frac{2 N \Act^sL^\ast(L^\ast + 1)}{\delta}\right)\left(2^{\frac{1}{2}\log_2\left(6\frac{T}{N\Act^s} - 1\right) + 1} - 2\right) \\
&\leq 16N\Act^s\log\left(\frac{2 N \Act^sL^\ast(L^\ast + 1)}{\delta}\right)\sqrt{6\frac{T}{N\Act^s}} \\
&= O\left(\sqrt{N\Act^sT}\log\left(\frac{N\Act^s}{\delta}\log\left(\frac{T}{N\Act^s}\right)\right)\right).
\end{align*}
\end{proof}
%\newpage
%\input{checklist}

\end{document}